\icmltitlerunning{Transfer in Deep Reinforcement Learning
Using Successor Features and Generalised Policy Improvement}
\begin{document}

\twocolumn[
\icmltitle{Transfer in Deep Reinforcement Learning
Using \\ Successor Features and Generalised Policy Improvement}




\begin{icmlauthorlist}
\icmlauthor{Andr\'e Barreto}{dm}
\icmlauthor{Diana Borsa}{dm}
\icmlauthor{John Quan}{dm}
\icmlauthor{Tom Schaul}{dm}
\icmlauthor{David Silver}{dm} \\
\icmlauthor{Matteo Hessel}{dm}
\icmlauthor{Daniel Mankowitz}{dm}
\icmlauthor{Augustin \v{Z}\'idek }{dm}
\icmlauthor{R\'emi Munos}{dm}
\end{icmlauthorlist}

\icmlaffiliation{dm}{DeepMind}
\icmlcorrespondingauthor{Andr\'e Barreto}{andrebarreto@google.com}

\icmlkeywords{Machine Learning, ICML}

\vskip 0.3in
]



\printAffiliationsAndNotice{}  

\newcommand{\cp}{\citep}
\newcommand{\ca}{\citeauthor}
\newcommand{\cy}{\citeyear}
\newcommand{\ct}{\citet}
\newcommand{\ctp}[1]{\ca{#1}'s~(\cy{#1})}
\newcommand{\cwp}[1]{\ca{#1}, \cy{#1}}
\newcommand{\ctt}[1]{\ca{#1}~\cp{#1}} 

\newcommand{\mat}[1]{\ensuremath{\boldsymbol{\mathrm{#1}}}}
\newcommand{\fnc}[1]{\ensuremath{\mathrm{#1}}}

\newtheorem{theorem}{Theorem}
\newtheorem{lemma}{Lemma}
\newtheorem{proposition}{Proposition}

\newcommand{\R}{\ensuremath{\mathbb{R}}}
\newcommand{\M}{\ensuremath{\mathcal{M}^{\phi}}}
\newcommand{\MM}{\ensuremath{\mathcal{M}}} 
\newcommand{\E}{\ensuremath{\mathrm{E}}}
\newcommand{\MMtr}{\ensuremath{{\hat{\MM}}}} 
\newcommand{\Mtr}{\ensuremath{M}}
\newcommand{\wtr}{\ensuremath{{\mat{w}}}}
\newcommand{\rtr}{\ensuremath{{r}}}
\newcommand{\rvtr}{\ensuremath{{\mat{r}}}}
\newcommand{\Rtr}{\ensuremath{{R}}}
\newcommand{\Wtr}{\ensuremath{{\mat{W}}}}

\newcommand{\Normal}{\ensuremath{\mathcal{N}}}

\newcommand{\w}{\mat{w}}
\newcommand{\WW}{\mat{W}}
\newcommand{\Wt}{\ensuremath{\tilde{\mat{W}}}}
\renewcommand{\r}{\mat{r}}
\newcommand{\z}{\mat{z}}
\newcommand{\Z}{\mat{Z}}
\newcommand{\vphi}{\mat{\phi}}
\newcommand{\vpsi}{\mat{\psi}}
\newcommand{\vprm}{\mat{\z}}
\newcommand{\tvphi}{\ensuremath{\tilde{\mat{\phi}}}}
\newcommand{\tvpsi}{\ensuremath{\tilde{\mat{\psi}} }}
\newcommand{\tPsi}{\ensuremath{\tilde{\Psi} }}
\newcommand{\tmpsi}{\ensuremath{\tilde{\mat{\Psi}} }}
\newcommand{\MPhi}{\mat{\Phi}}
\newcommand{\MPsi}{\mat{\Psi}}
\renewcommand{\P}{\mat{P}}
\renewcommand{\t}{\ensuremath{\top}}

\renewcommand{\th}[1]{\text{#1$^\mathrm{th}$}}
\newcommand{\ith}{\th{i}}
\newcommand{\jth}{\th{j}}
\newcommand{\kth}{\th{k}}

\newcommand{\argmin}{\ensuremath{\mathrm{argmin}}}
\newcommand{\argmax}{\ensuremath{\mathrm{argmax}}}
\renewcommand{\min}{\ensuremath{\mathrm{min}}}

\newcommand{\Qmax}{\ensuremath{Q_{\mathrm{max}}}}
\newcommand{\pimax}{\ensuremath{\pi_{\max}}}
\newcommand{\Qpimax}{\ensuremath{Q^{\pi_{\max}}}}
\newcommand{\Tpimax}{\ensuremath{T^{\pi_{\max}}}}

\newcommand{\Mh}{\ensuremath{\hat{M}}} 
\newcommand{\Qh}{\ensuremath{\hat{Q}}}
\newcommand{\pih}{\ensuremath{\hat{\pi}}}
\newcommand{\wh}{\ensuremath{\hat{\w}}}
\newcommand{\Qhmax}{\ensuremath{\hat{Q}_{\mathrm{max}}}}
\newcommand{\pihmax}{\ensuremath{\pi}}
\newcommand{\Qhpimax}{\ensuremath{\hat{Q}^{\pi_{\max}}}}
\newcommand{\Thpimax}{\ensuremath{\hat{T}^{\pi_{\max}}}}

\newcommand{\rdif}{\ensuremath{\delta}}

\newcommand{\Qt}{\ensuremath{\tilde{Q}}}
\newcommand{\vQt}{\ensuremath{\tilde{\mat{Q}}}}
\newcommand{\pit}{\ensuremath{\tilde{\pi}}}
\newcommand{\Qtmax}{\ensuremath{\tilde{Q}_{\mathrm{max}}}}
\newcommand{\pitmax}{\ensuremath{{\pi}}}
\newcommand{\Qpitmax}{\ensuremath{Q^{\pi}}}
\newcommand{\Tpitmax}{\ensuremath{T^{\pi}}}
\newcommand{\wt}{\ensuremath{\tilde{\w}}}
\renewcommand{\H}{\mat{H}} 

\newcommand{\agt}{\fnc{agt}}

\newcommand{\piexp}{\ensuremath{\pi}}
\newcommand{\piexpj}{\ensuremath{_{\piexp_j}}}
\newcommand{\piexpi}{\ensuremath{_{\piexp_i}}}
\newcommand{\piexpk}{\ensuremath{_{\piexp_k}}}

\newcommand{\la}{\ensuremath{\leftarrow}}

\newcommand{\vpi}{\ensuremath{\mat{v}^{\pi}}}
\newcommand{\rpi}{\ensuremath{\mat{r}^{\pi}}}
\newcommand{\Ppi}{\ensuremath{\mat{P}^{\pi}}}

\newcommand{\phimax}{\ensuremath{\vphi_{\max}}}

\renewcommand{\S}{\ensuremath{\mathcal{S}}}
\newcommand{\A}{\ensuremath{\mathcal{A}}}
\newcommand{\W}{\ensuremath{\mathcal{W}}}

\newcommand{\score}{\ensuremath{\mathrm{score}}}
\newcommand{\currscore}{\ensuremath{\mathrm{curr\_score}}}
\newcommand{\ntpu}{\ensuremath{\mathrm{used}}}
\newcommand{\act}{\ensuremath{\mathrm{c}}}

\newcommand{\I}[1]{\ensuremath{\delta\{#1\}}} 
\newcommand{\II}{\ensuremath{\delta}} 

\newcommand{\vf}{\ensuremath{\mat{\varphi}}}
\newcommand{\vfi}{\ensuremath{\mat{\varphi}_i}}
\newcommand{\vfp}{\ensuremath{\mat{\varphi}_p}}
\newcommand{\vweights}{\mat{z}}
\newcommand{\od}{\ensuremath{\mat{o}}}
\newcommand{\rank}{\ensuremath{\mathrm{rank}}}

\newcommand{\ns}{\ensuremath{\mathrm{ns}}}
\newcommand{\nstr}{\ensuremath{n_{\mathrm{tr}}}}
\newcommand{\nsts}{\ensuremath{n_{\mathrm{ts}}}}
\newcommand{\dist}{\ensuremath{\mathcal{D}}}
\newcommand{\rt}{\ensuremath{\tilde{r}}}
\newcommand{\rvt}{\ensuremath{\tilde{\mat{r}}}}
\newcommand{\vrt}{\ensuremath{\tilde{\mat{r}}}}
\newcommand{\params}{\ensuremath{\mat{\theta}}}

\newcommand{\eb}{\ensuremath{\mathrm{extend\_basis}}}
\newcommand{\sfgpi}{SF\hspace{0.2mm}\&\hspace{0.2mm}GPI}
\newcommand{\sfgpirl}{\sfgpi-continual}
\newcommand{\sfgpisl}{\sfgpi-transfer}

\newcommand{\sm}{\text{-}}
\newcommand{\mo}{{\normalfont-{\tt 1}}}
\newcommand{\taskfive}{{\tt 1100}}
\newcommand{\tasksix}{{\tt 0111}}
\newcommand{\taskseven}{{\tt 1111}}
\newcommand{\taskeight}{{\tt \mo000}}
\newcommand{\tasknine}{{\tt \mo\mo00}}
\newcommand{\taskten}{{\tt \mo100}}
\newcommand{\taskeleven}{{\tt \mo1\mo0}}
\newcommand{\tasktwelve}{{\tt \mo101}}
\newcommand{\taskthirteen}{{\tt \mo1\mo1}}
\newcommand{\taskexample}{{\tt 1\mo00}}

\newcommand{\pitest}{\ensuremath{\pi_{\mathrm{test}}}}

\begin{abstract}
The ability to transfer skills across tasks has the potential to scale up reinforcement learning (RL) agents to environments currently out of reach. Recently, a framework based on two ideas, successor features (SFs) and generalised policy improvement (GPI), has been introduced as a principled way of transferring skills. In this paper we extend the \sfgpi\ framework in two ways. One of the basic assumptions underlying the original formulation of \sfgpi\ is that rewards for all tasks of interest can be computed as linear combinations of a fixed set of features. We relax this constraint and show that the theoretical guarantees supporting the framework can be extended to any set of tasks that only differ in the reward function. Our second contribution is to show that one can use the reward functions themselves as features for future tasks, without any loss of expressiveness, thus removing the need to specify a set of features beforehand. This makes it possible to combine \sfgpi\ with deep learning in a more stable way. We empirically verify this claim on a complex 3D environment where observations are images from a first-person perspective. We show that the transfer promoted by \sfgpi\ leads to very good policies on unseen tasks almost instantaneously. We also describe how to learn policies specialised to the new tasks in a way that allows them to be added to the agent's set of skills, and thus be reused in the future.
\end{abstract}

\section{Introduction}
\label{sec:introduction}

In recent years reinforcement learning (RL) has undergone a major change in terms of the scale of its applications: from relatively small and well-controlled benchmarks to problems designed to challenge humans---who are now consistently outperformed by artificial agents in domains considered out of reach only a few years ago~\cp{mnih2015human,bowling2015headsup,silver2016mastering,silver2017mastering}. 

At the core of this shift has been deep learning, a machine learning paradigm that recently attracted a lot of attention due to impressive accomplishments across many areas~\cp{goodfellow2016deep}. Despite the successes achieved by the combination of deep learning and RL, dubbed ``deep RL'', the agents's basic mechanics have remained essentially the same, with each problem tackled as an isolated monolithic challenge. An alternative would be for our agents to decompose a problem into smaller sub-problems, or ``tasks'', whose solutions can be reused multiple times, in different scenarios. This ability of explicitly transferring skills to quickly adapt to new tasks could lead to another leap in the scale of RL applications.

Recently, \ct{barreto2017successor} proposed a framework for transfer based on two ideas: generalised policy improvement (GPI), a generalisation of the classic dynamic-programming operation, and successor features (SFs), a representation scheme that makes it possible to quickly evaluate a policy across many tasks. The \sfgpi\ approach is appealing because it allows transfer to take place between any two tasks, regardless of their temporal order, and it integrates almost seamlessly into the RL framework.

In this paper we extend \ctp{barreto2017successor} framework in two ways. First, we argue that its applicability is broader than initially shown. \sfgpi\ was designed for the scenario where each task corresponds to a different reward function; one of the basic assumptions in the original formulation was that the rewards of all tasks can be computed as a linear combination of a fixed set of features. We show that such an assumption is not strictly necessary, and in fact it is possible to have guarantees on the performance of the transferred policy even on tasks that are not in the span of the features.  

The realisation above adds some flexibility to the problem of computing features that are useful for transfer. Our second contribution is to show a simple way of addressing this problem that can be easily combined with deep learning. Specifically, by looking at the associated approximation from a slightly different angle, we show that one can replace the features with actual rewards. This makes it possible to apply \sfgpi\ online at scale. In order to verify this claim, we revisit one of \ctp{barreto2017successor} experiments in a much more challenging format, replacing a fully observable $2$-dimensional environment with a $3$-dimensional domain where observations are images from a first-person perspective. We show that the transfer promoted by \sfgpi\ leads to good policies on unseen tasks almost instantaneously. Furthermore, we show how to learn policies that are specialised to the new tasks in a way that allows them to be added to the agent's ever-growing set of skills, a crucial ability for continual learning~\cp{thrun96learning}.

\section{Background}
\label{sec:background}

In this section we present the background material that will serve as a foundation for the rest of the paper.

\subsection{Reinforcement learning}
\label{sec:rl}

In RL an agent interacts with an environment and selects actions in order to maximise
the expected amount of reward received~\cp{sutton98reinforcement}. 
We model this scenario using the formalism of 
\emph{Markov decision processes} (MDPs, 
\cwp{puterman94markov}). 
An MDP is a tuple $M \equiv (\S,\A,p,R,\gamma)$
whose components are defined as follows.
The sets $\S$ and $\A$ are the state and action spaces, respectively.
The function $p$ defines the \emph{dynamics} of the MDP: 
specifically, $p(\cdot|s,a)$ gives the next-state 
distribution upon taking action $a$ in state $s$. 
The random variable $R(s,a,s')$ determines the reward received in the transition $s \xrightarrow{a} s'$; it is often convenient to consider 
the expected value of this variable, $r(s,a,s')$.
Finally, the discount factor $\gamma \in [0,1)$ gives 
smaller weights to future rewards. 
Given an MDP, the goal is to maximise the expected 
\emph{return}
$
G_{t} = \sum_{i=0}^{\infty} \gamma^{i} R_{t+i},
$
where $R_t = R(S_t, A_t, S_{t+1})$.
In order to do so the agent computes a \emph{policy} $\pi: \S \mapsto \A$. 

A principled way to address the RL problem is to use methods
derived from \emph{dynamic programming} (DP)~\cp{puterman94markov}. 
RL methods based on DP usually compute the \emph{action-value function} of a policy $\pi$,
defined as:
\begin{equation}
\label{eq:Q}
Q^{\pi}(s,a) \equiv \E^{\pi} \left[ G_{t} \,|\, S_{t} = s, A_{t} = a \right],
\end{equation}
where $\E^{\pi}[\cdot]$ denotes expectation over the sequences of transitions induced by $\pi$. 
The computation of $Q^{\pi}(s,a)$ is called {\em policy evaluation}. 
Once a policy $\pi$ has been evaluated, we can compute a \emph{greedy} policy $\pi'(s) \in \argmax_{a} Q^{\pi}(s,a)$ that is guaranteed to perform at least as well as $\pi$, that is: $Q^{\pi'}(s,a) \ge Q^{\pi}(s,a)$ for any $(s,a) \in \S \times \A$. The computation of $\pi'$ is referred to as \emph{policy improvement}. The alternation between policy evaluation and policy improvement is at the core of many DP-based RL algorithms, which usually carry out these steps only approximately.

As a convention, we will add a tilde to a symbol to indicate that the associated quantity is an approximation; we will then refer to the respective tunable parameters as \params. For example, the agent computes an approximation $\Qt^\pi \approx Q^\pi$ by tuning $\params_Q$. In deep RL some of the approximations computed by the agent, like $\Qt^{\pi}$, are represented by complex nonlinear approximators composed of many levels of nested tunable functions; among these, the most popular models are by far deep neural networks~\cp{goodfellow2016deep}. 

In this paper we are interested in the problem of \emph{transfer} in RL~\cp{taylor2009transfer,lazaric2012transfer}. Specifically, we ask the question: given a set of MDPs that only differ in their reward function, how can we leverage knowledge gained in some MDPs to speed up the solution of others?

\subsection{\sfgpi}
\label{sec:sfs_gpi}

\ct{barreto2017successor} propose a framework to solve a restricted version of the problem above. Specifically, they restrict the scenario of interest to MDPs whose expected one-step reward can be written as
\begin{equation}
\label{eq:reward}
r(s,a, s') = \vphi(s,a, s')^\t\w,
\end{equation}
where $\vphi(s,a, s') \in \R^{d}$ are features of $(s,a,s')$ and 
$\w \in \R^{d}$ are weights. In order to build some intuition, it helps to think of $\vphi(s,a,s')$ as salient events that may be desirable or undesirable to the agent. Based on~(\ref{eq:reward}) one can define an environment $\M(\S, \A, p, \gamma)$ as
{\small
\begin{equation}
\label{eq:M}
\M \equiv \{ M(\S, \A, p, r, \gamma) | r(s,a,s') = \vphi(s,a,s')^\t\w \},
\end{equation}
}
\hspace{-2mm} that is, \M\ is the set of MDPs induced by $\vphi$ through 
all possible instantiations of \w.
We call each $M \in \M$ a \emph{task}.
Given a task $M_i \in \M$ defined by $\w_i \in \R^d$, 
we will use $Q^{\pi}_i$ to refer to the value function of $\pi$ 
on $M_i$.

\ct{barreto2017successor} propose \sfgpi\ as a way to promote transfer between tasks in \M. As the name suggests, GPI is a generalisation of the policy improvement step described in Section~\ref{sec:rl}. The difference is that in GPI the improved policy is computed based on a \emph{set} of value functions rather than on a single one. 
Let $Q^{\pi_1}, Q^{\pi_2}, ... Q^{\pi_n}$ be the action-value functions of $n$ policies defined on a given MDP, and let $Q^{\max} = \max_i Q^{\pi_i}$. If we define $\pi(s) \la \argmax_a Q^{\max}(s,a)$ for all $s \in \S$, then $Q^{\pi}(s,a) \ge Q^{\max}(s,a)$ for all $(s,a) \in \S \times \A$. The result also extends to the scenario where we replace $Q^{\pi_i}$ with approximations $\Qt^{\pi_i}$, in which case the lower bound on $Q^{\pi}(s,a)$ gets looser with the approximation error, as in approximate DP~\cp{bertsekas96neuro-dynamic}.

In the context of transfer, GPI makes it possible to leverage  
knowledge accumulated over time, across multiple tasks, to learn a new task faster.
Suppose that the agent has access to $n$ policies $\pi_1, \pi_2, ..., \pi_n$. These can be arbitrary policies, but for the sake of the argument let us assume they are solutions for tasks $M_1$, $M_2$, ..., $M_n$. Suppose also that when exposed to a new task $M_{n+1} \in \M$ the agent computes $Q^{\piexpi}_{n+1}$---the value functions of the policies $\piexp_i$ under the reward function induced by $\w_{n+1}$.
In this case, applying GPI to the set $\{Q^{\piexp_1}_{n+1}, Q^{\piexp_2}_{n+1},..., Q^{\piexp_n}_{n+1}\}$ will result in a policy that performs at least as well as any of the policies $\piexp_i$.

Clearly, the approach above is appealing only if we have a way to quickly compute the value functions of the policies $\pi_i$ on the task $M_{n+1}$. This is where SFs come in handy. SFs make it possible to compute the value of a policy $\pi$ on \emph{any} task $M_i \in \M$ by simply plugging in the representation the vector $\w_i$ defining the task. Specifically, if we substitute~(\ref{eq:reward}) in~(\ref{eq:Q}) we have
 \begin{align}
  \label{eq:sf} 
  \nonumber Q^{\pi}_{i}(s,a) 
  \nonumber & = \E^{\pi} \left[{\textstyle \sum_{i=t}^{\infty}} \gamma^{i-t} 
\vphi_{i+1} \,|\, S_t = s, A_t = a \right]^{\t} \w_{i} \\
  & \equiv \vpsi^{\pi}(s,a)^{\t} \w_{i},
 \end{align}
where $\vphi_t = \vphi(s_t, a_t, s_{t+1})$ and $\vpsi^{\pi}(s,a)$ are the SFs of $(s,a)$ under policy $\pi$. As one can see, SFs decouple the dynamics of the MDP $M_i$ from its rewards~\cp{dayan93improving}. One benefit of doing so is that if we replace $\w_i$ with $\w_j$ in ~(\ref{eq:sf}) we immediately obtain the evaluation of $\pi$ on task $M_j$. 
It is also easy to show that 
{\small
\begin{align}
 \label{eq:bellman_psi}
  \vpsi^\pi(s, a) 
  & = \E^{\pi}[\vphi_{t+1}+ \gamma \vpsi^\pi(S_{t+1}, \pi(S_{t+1}))  \,|\, S_t 
= s, A_t = a ],
 \end{align}
}
\hspace{-2mm} that is, SFs satisfy a Bellman equation in which 
$\phi_i$ play the role of rewards. 
Therefore, SFs can be learned using any conventional RL method~\cp{szepesvari2010algorithms}.

The combination of SFs and GPI provides a general framework for transfer in environments of the form~(\ref{eq:M}). Suppose that we have learned the functions $Q^{\piexpi}_i$ using the representation scheme~(\ref{eq:sf}). When exposed to the task defined by $r_{n+1}(s,a,s') = \vphi(s,a,s')^\t\w_{n+1}$, as long as we have $\w_{n+1}$  
we can immediately compute $Q^{\piexpi}_{n+1}(s,a) = \vpsi^{\piexpi}(s,a)^\t\w_{n+1}$.
This reduces the computation of all ${Q}^{\piexpi}_{n+1}$ to the problem of determining $\w_{n+1}$, which can be posed as a supervised learning problem whose objective is to minimise some loss derived from~(\ref{eq:reward}). Once ${Q}^{\piexpi}_{n+1}$ have been computed, we can apply GPI to derive a policy $\pi$ that is no worse, and possibly better, than $\piexp_1, ..., \piexp_n$ on task $M_{n+1}$.

\section{Extending the notion of environment}

\ct{barreto2017successor} focus on environments of the form~(\ref{eq:M}). In this paper we propose a more general notion of environment: 
\begin{equation}
\label{eq:M_ext}
\MM(\S, \A, p, \gamma) \equiv \{ M(\S, \A, p, \cdot, \gamma)\}.
\end{equation}
\MM\ contains \emph{all} MDPs that share the same \S, \A, $p$, and $\gamma$, regardless of whether their rewards can be computed as a linear combination of the features $\vphi$. Clearly, $\MM \supset \M$. 

We want to devise a transfer framework for environment \MM. Ideally this framework should have two properties. First, it should be principled, in the sense that we should be able to provide theoretical guarantees regarding the performance of the transferred policies. Second, it should give rise to simple methods that are applicable in practice, preferably in combination with deep learning. Surprisingly, we can have both of these properties by simply looking at \sfgpi\ from a slightly different point of view, as we show next.

\subsection{Guarantees on the extended environment}

\ct{barreto2017successor} provide theoretical guarantees on the performance of \sfgpi\ applied to any task $M \in \M$. In this section we show that it is in fact possible to derive guarantees for any task in \MM. Our main result is below: 

\begin{proposition}
\label{teo:beyond_linearity}
Let $M \in \MM$ and let $Q^{\pi_j^{*}}_i$ be the action-value function of an optimal 
policy of $M_j \in \MM$ when executed in $M_i \in \MM$. Given approximations 
$\{\Qt^{\pi_1}_i, \Qt^{\pi_2}_i, ..., \Qt^{\pi_n}_i\}$ such that 
$
 \left|Q^{\pi_j^{*}}_i(s, a) - \Qt^{\pi_j}_i(s,a) \right| \le \epsilon
$
for all $s \in \S$, $a \in \A$, and $j \in \{1, 2, ..., n\}$,
let 
\begin{equation}
\label{eq:gpi_policy}
\pihmax(s) \in \argmax_a {\max}_j \Qt^{\pi_j}_i (s,a).
\end{equation}
Then,
{\small
\begin{equation}
\label{eq:extrapolation_bound}
\|Q^{*} - Q^{\pi}\|_{\infty} 
\le \dfrac{2}{1-\gamma} \left( 
\| r - r_i \|_{\infty} +
\mathop{\min}_j \|r_i - r_j \|_{\infty} + \epsilon
\right),
\end{equation}
}
\hspace{-2mm} where $Q^*$ is the optimal value function of $M$, $Q^\pi$ is the value function of $\pi$ in $M$, and $\|f - g\|_\infty = \max_{s,a}|f(s,a) - g(s,a)|$.
\end{proposition}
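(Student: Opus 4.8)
The plan is to bound $Q^* - Q^\pi$ in the target task $M$ by routing through the reference task $M_i$, since the value functions that drive the GPI policy $\pi$ all live in $M_i$. The whole argument rests on two elementary sensitivity facts about MDPs that share $\S$, $\A$, $p$, and $\gamma$ but differ in reward, together with the approximate GPI inequality and a transfer lemma. Crucially, none of these ingredients require the rewards to lie in the span of a feature map, which is exactly why the result extends from \M\ to all of \MM.

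First I would record the two sensitivity lemmas. For a fixed policy the evaluation operators of two tasks differ only through the reward term, so $\|Q^\pi_i - Q^\pi_j\|_\infty \le \frac{1}{1-\gamma}\|r_i - r_j\|_\infty$; the same contraction-mapping argument applied to the Bellman optimality operators (the $\max$ being non-expansive) gives $\|Q^*_i - Q^*_j\|_\infty \le \frac{1}{1-\gamma}\|r_i - r_j\|_\infty$ for the optimal value functions. From the first of these I would derive the transfer lemma by inserting the chain $Q^*_i - Q^{\pi_j^{*}}_i = (Q^{\pi_i^{*}}_i - Q^{\pi_i^{*}}_j) + (Q^{\pi_i^{*}}_j - Q^{\pi_j^{*}}_j) + (Q^{\pi_j^{*}}_j - Q^{\pi_j^{*}}_i)$: optimality of $\pi_j^{*}$ in $M_j$ kills the middle term, and the two outer terms are each same-policy reward gaps, giving $Q^*_i - Q^{\pi_j^{*}}_i \le \frac{2}{1-\gamma}\|r_i - r_j\|_\infty$.

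Next I would establish the bound inside $M_i$ alone. The approximate version of the GPI result quoted in Section~\ref{sec:sfs_gpi} guarantees that the greedy policy $\pi$ of~(\ref{eq:gpi_policy}) satisfies $Q^\pi_i \ge \max_j Q^{\pi_j^{*}}_i - \frac{2\epsilon}{1-\gamma}$. Subtracting this from $Q^*_i$, taking $\min_j$, and applying the transfer lemma yields $\|Q^*_i - Q^\pi_i\|_\infty \le \frac{2}{1-\gamma}(\min_j \|r_i - r_j\|_\infty + \epsilon)$. Finally I would assemble the target bound with the split $Q^* - Q^\pi = (Q^* - Q^*_i) + (Q^*_i - Q^\pi_i) + (Q^\pi_i - Q^\pi)$, bounding the first term by the optimal-value sensitivity lemma and the last by the policy-evaluation sensitivity lemma (each contributing $\frac{1}{1-\gamma}\|r - r_i\|_\infty$), and the middle term by the $M_i$ bound just derived. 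Since $Q^* \ge Q^\pi$ pointwise in $M$, the sum of these pointwise bounds already controls $\|Q^* - Q^\pi\|_\infty$ and collapses to the claimed $\frac{2}{1-\gamma}(\|r - r_i\|_\infty + \min_j\|r_i - r_j\|_\infty + \epsilon)$.

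I expect the only real subtlety to be bookkeeping of the constants --- in particular, tracking which reward-gap term picks up a factor of $1$ versus $2$ as the chains are telescoped --- rather than any conceptual difficulty. The genuine insight the proof exploits is that the GPI computation is carried out entirely in $M_i$ while the policy is executed in $M$; the triangle inequality is precisely what lets us decouple these two tasks, and because every sensitivity bound depends only on the shared dynamics and a sup-norm reward gap, the argument is \emph{feature-agnostic} and therefore valid for any $M \in \MM$, not merely for $M \in \M$.
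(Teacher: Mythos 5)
Your proposal is correct and follows essentially the same route as the paper's proof: the three-term decomposition $Q^{*}-Q^{\pi}=(Q^{*}-Q^{*}_i)+(Q^{*}_i-Q^{\pi}_i)+(Q^{\pi}_i-Q^{\pi})$, the approximate GPI theorem applied inside $M_i$, and the two $\frac{1}{1-\gamma}\|r_i-r_j\|_\infty$ reward-sensitivity lemmas are exactly the ingredients the paper uses, with identical constants. The only (immaterial) difference is that you obtain the bound $Q^{*}_i-Q^{\pi_j^{*}}_i\le\frac{2}{1-\gamma}\|r_i-r_j\|_\infty$ by telescoping through $M_j$ and invoking optimality of $\pi_j^{*}$ there, whereas the paper splits the same quantity as $(Q^{\pi_i^{*}}_i-Q^{\pi_j^{*}}_j)+(Q^{\pi_j^{*}}_j-Q^{\pi_j^{*}}_i)$ and applies its optimal-value and fixed-policy sensitivity lemmas respectively.
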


The proof of Proposition~\ref{teo:beyond_linearity} is in the supplementary material. Our result provides guarantees on the performance of the GPI policy~(\ref{eq:gpi_policy}) applied to an MDP $M$ with \emph{arbitrary} reward function $r(s,a)$. 
Note that, although the proposition does not restrict any of the tasks to be in \M, in order to compute the GPI policy~(\ref{eq:gpi_policy}) we still need an efficient way of evaluating the policies $\pi_j$ on task $M_i$. As explained in Section~\ref{sec:background}, one way to accomplish this is to assume that $M_i$ and all $M_j$ appearing in the statement of the proposition belong to \M; this allows the use of SFs to quickly compute $\Qt^{\pi_j}_i (s,a)$.

Let us thus take a closer look at Proposition~\ref{teo:beyond_linearity} under the assumption that all MDPs belong to \M, except perhaps for $M$. The proposition is based on a reference MDP $M_i \in \M$. If $M_i = M_j$ for some $j$, the second term of~(\ref{eq:extrapolation_bound}) disappears, and we end up with a lower bound on the performance of a policy computed for $M_i$ when executed in $M$. More generally, one should think of $M_i$ as the MDP in \M\ that is ``closest'' to $M$ in some sense (so it may be that $M_i \ne M_j$ for all $j$). Specifically, if $r_i(s,a,s') = \vphi(s,a,s')^\t \w_i$, we can think of $\w_i$ as the vector that provides the best approximation $\vphi(s,a,s')^\t \w_i \approx r(s,a, s')$ under some well-defined criterion. The first term of~(\ref{eq:extrapolation_bound}) can thus be seen as the ``distance'' between $M$ and \M, which suggests that the performance of \sfgpi\ should degrade gracefully as we move away from the original environment \M. In the particular case where $M \in \M$ the first term of~(\ref{eq:extrapolation_bound}) vanishes, and we recover \ctp{barreto2017successor} Theorem~2. 

\subsection{Uncovering the structure of the environment}
\label{sec:uncovering}

We want to solve a subset of the tasks in \MM\ and use GPI to promote transfer between these tasks. In order to do so efficiently, we need a function $\vphi$ that covers the tasks of interest as much as possible. If we were to rely on \ctp{barreto2017successor} original results, we would have guarantees on the performance of the GPI policy only if \vphi\ spanned \emph{all} the tasks of interest. Proposition~\ref{teo:beyond_linearity} allows us to remove this requirement, since now we have theoretical guarantees for any choice of $\vphi$. In practice, though, we want features \vphi\ such that the first term of~(\ref{eq:extrapolation_bound}) is small for all tasks of interest.

There might be contexts in which we have direct access to features $\vphi(s,a,s')$ that satisfy~(\ref{eq:reward}), either exactly or approximately. Here though we are interested in the more general scenario where this structure is not available, nor given to the agent in any form. In this case the use of SFs requires a way of unveiling such a structure. 

\ct{barreto2017successor} assume the existence of a $\vphi \in \R^{d}$ that satisfy~(\ref{eq:reward}) exactly and formulate the problem of computing an approximate $\tvphi$ as a multi-task learning problem. The problem is decomposed into $D$ regressions, each one associated with a task. For reasons that will become clear shortly we will call these tasks \emph{base tasks} and denote them by $\MMtr \equiv \{\Mtr_1, \Mtr_2,..., \Mtr_D\} \subset \M$. The multi-task problem thus consists in solving the approximations 
\begin{equation}
\label{eq:multitask}
\tvphi(s,a,s')^\t \wt_i \approx \rtr_i(s,a,s'), \text{ for } i = 1, 2, ..., D, 
\end{equation}
where $\rtr_i$ is the reward of $\Mtr_i$~\cp{caruana97multitask,baxter2000model}.

In this section we argue that~(\ref{eq:multitask}) can be replaced by a much simpler approximation problem. Suppose for a moment that we know a function $\vphi$ and $D$ vectors $\w_i$ that satisfy~(\ref{eq:multitask}) exactly. If we then stack the vectors $\w_i$ to obtain a matrix $\WW \in \R^{D \times d}$, we can write $\r(s,a,s') = \WW \vphi(s,a,s')$, where the \ith\ element of $\rvtr(s,a,s') \in \R^{D}$ is $r_i(s,a,s')$. Now, as long as we have $d$ linearly independent tasks $\w_i$, we can write $\vphi(s,a,s') = (\WW^\t\WW)^{-1}\WW^\t \r(s,a,s') = \WW^{\dag} \r(s,a,s')$. Since $\vphi$ is given by a linear transformation of \r, any task representable by the former can also be represented by the latter. To see why this is so, note that for any task in \M\ we have $r(s,a,s') = \w^\t\vphi(s,a,s') = \w^\t \WW^{\dag} \r(s,a,s')= (\w')^\t\r(s,a,s')$. Therefore, we can use the rewards \r\ themselves as features, which means that we can replace~(\ref{eq:multitask}) with the much simpler approximation
\begin{equation}
\label{eq:simpler_approx}
\tvphi(s,a,s') = \rvt(s,a,s') \approx \r(s,a,s').
\end{equation}

One potential drawback of directly approximating  $\rvtr(s,a,s')$ is that we no longer have the flexibility of distinguishing between $d$, the dimension of the environment \M, and $D$, the number of base tasks in \MMtr. Since in general we will solve~(\ref{eq:multitask}) or~(\ref{eq:simpler_approx}) based on data, having $D > d$ may lead to a better approximation. On the other hand, using $\rvt(s,a,s')$ as features has a number of advantages that in many scenarios of interest should largely outweigh the loss in flexibility. 

One such advantage becomes clear when we look at the scenario above from a different perspective. Instead of assuming we know a \vphi\ and a \WW\ that satisfy~(\ref{eq:multitask}), we can note that, given any set of $D$ tasks $r_i(s,a,s')$, $k \le D$ of them will be linearly independent. Thus, the reasoning above applies without modification. Specifically, if we use the tasks's rewards as features, as in~(\ref{eq:simpler_approx}), we can think of them as \emph{inducing} a $\vphi(s,a,s') \in \R^k$---and, as a consequence, an environment \M. This highlights a benefit of replacing~(\ref{eq:multitask}) with~(\ref{eq:simpler_approx}): the fact that we can work directly in the space of tasks, which in many cases may be more intuitive. When we think of \vphi\ as a non-observable, abstract, quantity, it can be difficult to define what exactly the base tasks should be. In contrast, when we look at \rvtr\ directly the question we are trying to answer is: is it possible to approximate the reward function of all tasks of interest as a linear combination of the functions $\rtr_i$? As one can see, the set \MMtr\ works exactly as a basis, and thus the name ``base tasks''.

Another interesting consequence of using an approximation $\tvphi(s,a,s') \approx \rvtr(s,a,s')$ as features is that the resulting SFs are nothing but ordinary action-value functions. Specifically, the \jth\ component of $\tvpsi^{\pi_i}(s,a)$ is simply $\Qt^{\pi_i}_j(s,a)$. Next we discuss how this gives rise to a simple approach that can be combined with deep learning in a stable way.

\section{Transfer in deep reinforcement learning}
\label{sec:deep_transfer}

As described in the introduction, we are interested in using \sfgpi\ to build scalable agents that can be combined with deep learning in a stable way. Since deep learning generally involves vast amounts of data whose storage is impractical, we will be mostly focusing on methods that work online. As a motivating example for this section we show in Algorithm~\ref{alg:sfgpi} how SFs and GPI can be combined with \ctp{watkins92qlearning} $Q$-learning.\footnote{We use $x \xleftarrow{\alpha} y$ meaning $x \leftarrow x + \alpha y$. We also use $\nabla_{\params} f(x)$ to denote the gradient of $f(x)$ with respect to the parameters $\params$.} 

\setlength{\textfloatsep}{5pt}
\begin{algorithm}
   \caption{\sfgpi\ with $\epsilon$-greedy $Q$-learning} 
   \label{alg:sfgpi}
\begin{algorithmic}[1]
\REQUIRE
$\left\{\begin{array}{ll}
\tvphi, \; \tPsi \equiv \{\tvpsi^{\pi_1}, ..., \tvpsi^{\pi_n}\} & \text{features, SFs} \\
\eb\ & \text{learn a new SF?} \\
\alpha_{\psi}, \alpha_w, \epsilon, \ns  & \text{hyper-parameters} \\
\end{array}
\right.$
\IF{\eb}
\STATE create $\tvpsi^{\piexp_{n+1}}$ parametrised by $\params_\psi$
\STATE $\tPsi \la \tPsi \cup \{\tvpsi^{\piexp_{n+1}}\}$ \label{it:add_psi}
\ENDIF
\STATE select initial state $s \in \S$
\FOR{$\ns$ steps}
\STATE {\bf if} Bernoulli($\epsilon$)=1 {\bf then} 
$a \la$ Uniform($\A$) 
\COMMENT{exploration}
\STATE {\bf else} $a \la \argmax_b \max_i \tvpsi^{\piexp_i}(s,b)^\t \wt$
\COMMENT{GPI  \label{it:gpi} }
\STATE Execute action $a$ and observe $r$ and $s'$
\STATE $\wt \xleftarrow{\alpha_w} \left[r - \tvphi(s,a, s')^\t \wt \right]\tvphi(s,a, s')$
\COMMENT{learn \wt\ \label{it:learn_w} }
\IF[will learn new SFs]{\eb}
\STATE $a' \la \argmax_b \tvpsi^{\piexp_{n+1}}(s,b)^\t \wt$
\FOR{$i \la 1, 2, ..., d$}
\STATE  {\small $\delta \la \tvphi_i(s,a,s') + \gamma \tvpsi_i^{\piexp_{n+1}}(s', a') - 
\tvpsi^{\piexp_{n+1}}_i(s,a)$} \label{it:td}
\STATE  {\small $\params_\psi \xleftarrow{\alpha_{\psi}} \delta \nabla_{\params_\psi} \tvpsi^{\piexp_{n+1}}_i(s,a)$} 
\COMMENT{learn $\tvpsi^{\piexp_{n+1}}$ \label{it:learn_psi}}
\ENDFOR
\ENDIF
\STATE {\bf if} $s'$ is not terminal {\bf then} $s \la s'$ 
\STATE {\bf else} select initial state $s \in \S$
\ENDFOR
\end{algorithmic}
\end{algorithm}

Algorithm~\ref{alg:sfgpi} differs from standard $Q$-learning in two main ways. First, instead of selecting actions based on the value function being learned, the behaviour of the agent is determined by GPI (line~\ref{it:gpi}). Depending on the set of SFs \tPsi\ used by the algorithm, this can be a significant improvement over the greedy policy induced by $\Qt^{\pi_{n+1}}$, which usually is the main determinant of a $Q$-learning agent's behaviour. 

Algorithm~\ref{alg:sfgpi} also deviates from conventional $Q$-learning in the way a policy is learned. There are two possibilities here. One of them is for the agent to rely exclusively on the GPI policy computed over \tPsi\ (when the variable \eb\ is set to false). In this case no specialised policy is learned for the current task, which reduces the RL problem to the supervised problem of determining \wt\ (solved in line~\ref{it:learn_w} as a least-squares regression). 

Another possibility is to use data collected by the GPI policy to learn a policy $\pi_{n+1}$ specifically tailored for the task. As shown in lines~\ref{it:td} and~\ref{it:learn_psi}, this comes down to solving equation~(\ref{eq:bellman_psi}). When $\pi_{n+1}$ is learned the function 
$\Qt^{\pi_{n+1}}(s,a) = \tvpsi^{\pi_{n+1}}(s,a)^\t \wt$ 
also takes part in GPI. This means that, if the approximations $\Qt^{\pi_{i}}(s,a)$ are reasonably accurate, the policy computed by Algorithm~\ref{alg:sfgpi} should be strictly better than $Q$-learning's counterpart. The SFs  $\tvpsi^{\pi_{n+1}}$ can then be added to the set \tPsi, and therefore a subsequent execution of Algorithm~\ref{alg:sfgpi} will result in an even stronger agent. The ability to build and continually refine a set of skills is widely regarded as a desirable feature for continual (or lifelong) learning~\cp{thrun96learning}. 

\subsection{Challenges involved in building features}
\label{sec:difficulty}

In order to use Algorithm~\ref{alg:sfgpi}, or any variant of the \sfgpi\ framework, we need the features $\tvphi(s,a,s')$. A natural way of addressing the computation of $\tvphi(s,a,s')$ is to see it as a multi-task problem, as in~(\ref{eq:multitask}). We now discuss the difficulties involved in solving this problem online at scale. 

The solution of~(\ref{eq:multitask}) requires data coming from $D$ base tasks. In principle, we could look at the collection of sample transitions as a completely separate process. However, here we are assuming that the base tasks \MMtr\ are part of the RL problem, that is, we want to collect data using policies that are competent in \MMtr.
In order to maximise the potential for transfer, while learning the policies $\pi_i$ for the base tasks $\Mtr_i$ we should also learn the associated SFs $\tvpsi^{\pi_i}$; this corresponds to building the initial set \tPsi\ used by Algorithm~\ref{alg:sfgpi}. Unfortunately, learning value functions in the form~(\ref{eq:sf}) while solving~(\ref{eq:multitask}) can be problematic. Since the SFs $\tvpsi^{\pi_i}$ depend on $\tvphi$, learning the former while refining the latter can clearly lead to undesirable solutions (this is akin to having a non-stationary reward function). On top of that, the computation of \tvphi\ itself depends on the SFs $\tvpsi^{\pi_i}$, for they ultimately define the data distribution used to solve~(\ref{eq:multitask}). This circular dependency can make the process of concurrently learning $\tvphi$ and $\tvpsi^{\pi_i}$ unstable---something we observed in practice. 

One possible solution is to use a conventional value function representation while solving~(\ref{eq:multitask}) and only learn SFs for the subsequent tasks~\cp{barreto2017successor}. This has the disadvantage of not reusing the policies learned in \MMtr\ for transfer. Alternatively, one can store all the data and learn the SFs associated with the base tasks only \emph{after} \tvphi\ has been learned, but this may be difficult in scenarios involving large amounts of data. Besides, any approximation errors in $\tvphi$ will already be reflected in the initial \tPsi\ computed in \MMtr.

\subsection{Learning features online while retaining transferable knowledge}
\label{sec:learn_sfs}

To recapitulate, we are interested in solving $D$ base tasks $\Mtr_i$ and, while doing so, build \tvphi\ and the initial set \tPsi\ to be used by Algorithm~\ref{alg:sfgpi}. We want \tvphi\ and \tPsi\ to be learned concurrently, so we do not have to store transitions, and preferably \tPsi\ should not reflect approximation errors in~\tvphi. 

We argue that one can accomplish all of the above by replacing~(\ref{eq:multitask}) with~(\ref{eq:simpler_approx}), that is, by directly approximating $\rvtr(s,a,s')$, which is observable, and adopting the resulting approximation as the features \tvphi\ required by Algorithm~\ref{alg:sfgpi}. 

As discussed in Section~\ref{sec:uncovering}, when using rewards as features the resulting SFs are collections of value functions: $\tvpsi^{\pi_i} = \vQt^{\pi_i} \equiv [\Qt^{\pi_i}_1, \Qt^{\pi_i}_2, ..., \Qt^{\pi_i}_D]$. This leads to a particularly simple way of building the features \tvphi\ while retaining transferable knowledge in $\tPsi$. Given a set of $D$ base tasks $\Mtr_i$, while solving them we only need to carry out two extra operations: compute approximations $\tilde{r}_i(s,a,s')$ of the functions $\rtr_i(s,a,s')$, to be used as \tvphi, and evaluate the resulting policies on all tasks---{\sl i.e.,} compute $\Qt^{\pi_i}_j$---to build \tPsi.

Before providing a practical method to compute \tvphi\ and $\tPsi$, we note that, although the approximations $\tilde{r}_i(s,a,s')$ can be learned independently from each other, the computation of $\vQt^{\pi_i}$ requires policy $\pi_i$ to be evaluated under different reward signals. This can be accomplished in different ways; here we assume that the agent is able to interact with the tasks $\Mtr_i$ in parallel. We can consider that at each transition the agent observes rewards from all the base tasks, $\rvtr \in \R^D$, or a single scalar $\rtr_i$ associated with one of them. We will assume the latter, but our solution readily extends to the scenario where the agent simultaneously observes $D$ tasks.

Algorithm~\ref{alg:sfgpi_basis} shows a possible way of implementing our solution, again using $Q$-learning as the basic RL method. We highlight the fact that GPI can already be used in this phase, as shown in line~\ref{it:gpi2}, which means that the policies $\pi_i$ can ``cooperate'' to solve each task $\Mtr_i$.
\begin{algorithm}
   \caption{Build \sfgpi\ basis with $\epsilon$-greedy $Q$-learning} 
   \label{alg:sfgpi_basis}
\begin{algorithmic}[1]
\REQUIRE 
$\left\{\begin{array}{ll}
\Mtr_1, \Mtr_2, ..., \Mtr_D & \text{base tasks} \\
\alpha_Q, \alpha_r, \epsilon, \ns\  & \text{hyper-parameters} \\
\end{array}
\right.$
\FOR{$\ns$ steps}
\STATE select a task $t \in \{1, 2, ..., D\}$ and a state $s \in \S$
\label{it:task_selection}
\STATE {\bf if} Bernoulli($\epsilon$)=1 {\bf then} 
$a \la$ Uniform($\A$) 
\COMMENT{exploration}
\STATE {\bf else} $a \la \argmax_b \max_i \Qt^{\piexp_i}_t(s,b)$
\COMMENT{GPI \label{it:gpi2}}
\STATE Execute action $a$ in $\Mtr_t$ and observe $\rtr$ and $s'$
\STATE $\params_r \xleftarrow{\alpha_r} \left[\rtr - \rt_t(s,a,s')\right]
\nabla_{\params_r} \rt_t(s,a,s')$ 
\label{it:learn_r}
\FOR{$i \la 1, 2, ..., D$}
\STATE $a' \la \argmax_b \Qt^{\piexp_i}_i(s,b)$
\COMMENT{$a' \equiv \piexp_i(s)$}
\STATE {\small $\params_Q \xleftarrow{\alpha_Q} 
\left[r + \gamma \Qt^{\piexp_i}_t(s', a') - 
\Qt^{\piexp_i}_t(s,a)\right] \nabla_{\params_Q} \Qt^{\piexp_i}_t(s,a)$} 
\label{it:learn_q}
\ENDFOR
\ENDFOR
\STATE {\bf return} $\tvphi \equiv \left[ \tilde{r}_1, ..., \tilde{r}_D \right]$ and $\tPsi \equiv \{ \vQt^{\pi_1}, ..., \vQt^{\pi_D}\}$
\end{algorithmic}
\end{algorithm}

Note that if we were to learn general $\tvpsi^{\pi_i}(s,a)$ and $\tvphi(s,a,s')$ in parallel we would necessarily have to use the latter to update the former, which means computing an approximation on top of another (see~(\ref{eq:bellman_psi})). In contrast, when learning $\Qt^{\pi_i}_j(s,a)$ we can use the \emph{actual} rewards $\rtr_j(s,a,s')$, as opposed to the approximations $\rt_j(s,a,s')$ (line~\ref{it:learn_q} of Algorithm~\ref{alg:sfgpi_basis}). This means that $\tvphi$ and $\tPsi$ can be learned concurrently without the accumulation of approximation errors in $\tvpsi^{\pi_i}$, as promised.

\section{Experiments}
\label{sec:experiments}

In this section we use experiments to assess whether the proposed approach can indeed promote transfer on large-scale domains. Here we focus on what we consider the most relevant aspects of our experiments; for further details and additional results please see the supplementary material.

 \subsection{Environment}
 \label{sec:environment}
 
 The environment we consider is conceptually similar to one of the problems used by~\ct{barreto2017successor} to evaluate their framework: the agent has to navigate in a room picking up desirable objects while avoiding undesirable ones. Here the problem is tackled in a particularly challenging format: instead of observing the state $s_t$ at time step $t$, as in the original experiments, the agent interacts with the environment from a first-person perspective, only receiving as observation a $84 \times 84$ image $o_t$ that is insufficient to disambiguate the actual underlying state of the MDP (see Figure~\ref{fig:observation}). 
 
\begin{figure}
\centering
\subfigure[Screenshot of environment\label{fig:observation}]{
\centering
\begin{minipage}{0.25\textwidth}
\centering
\vspace{1mm}
\includegraphics[scale=0.09]{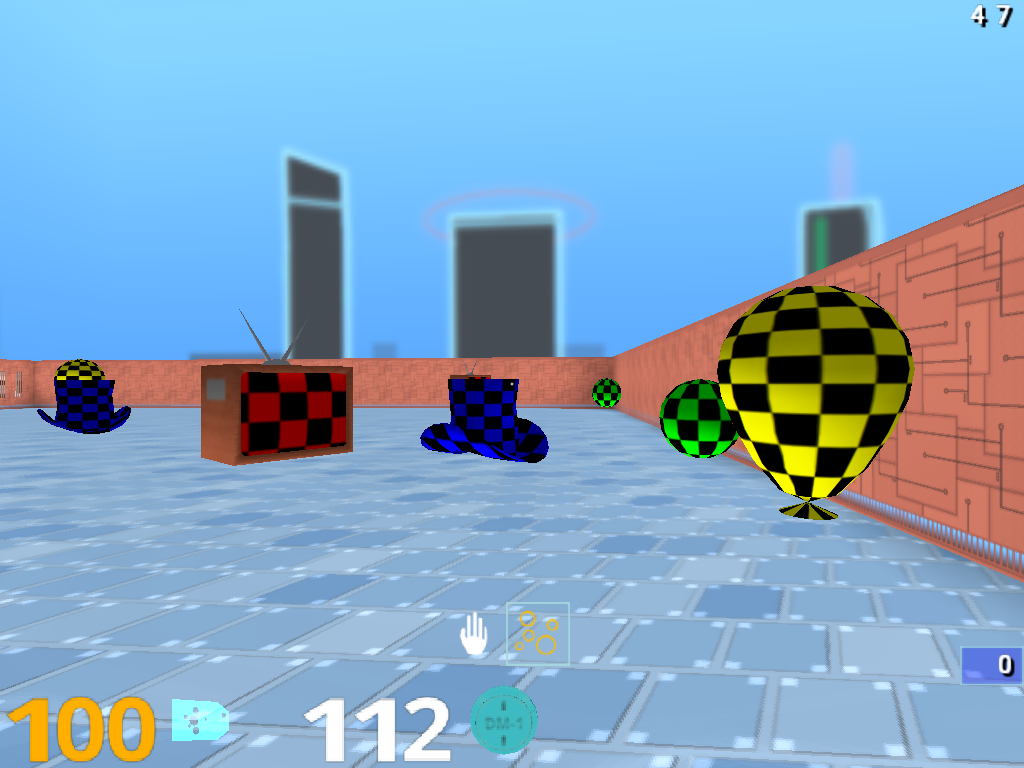}
\vspace{1mm}
\end{minipage}
}
\subfigure[Base tasks \MMtr\ \label{fig:base_tasks}]{
\centering
\begin{minipage}{0.2\textwidth}
\setlength\tabcolsep{1pt} 
\begin{tabular}{|c|cccc|}
\multicolumn{1}{c}{} 
 & \includegraphics[scale=0.04]{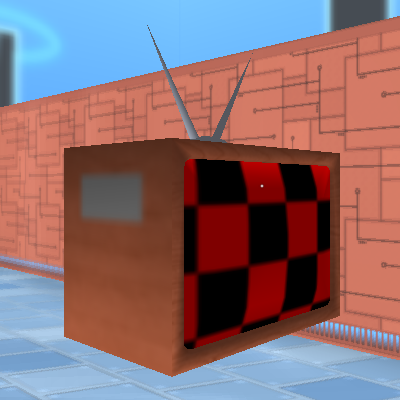} 
 & \includegraphics[scale=0.04]{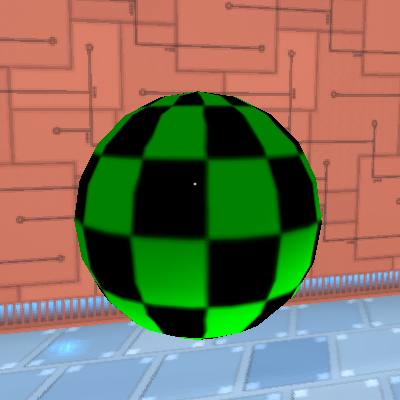} 
 & \includegraphics[scale=0.04]{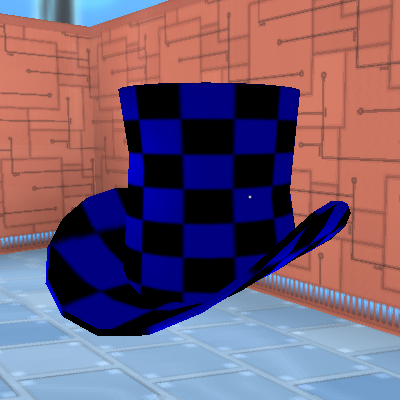} 
 & \multicolumn{1}{c}{\includegraphics[scale=0.04]{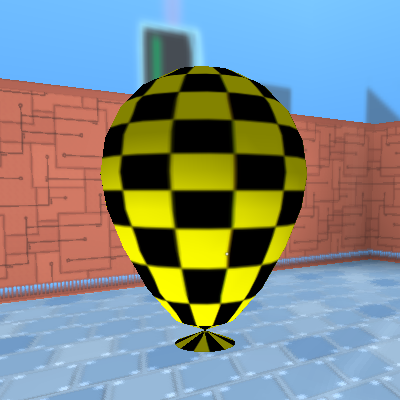}} \\ \hline
 $\Mtr_1$ &  1 & 0 & 0 & 0 \\
 $\Mtr_2$  & 0 & 1 & 0 & 0 \\
 $\Mtr_3$  & 0 & 0 & 1 & 0 \\
 $\Mtr_4$  & 0 & 0 & 0 & 1 \\ \cline{1-5}
   \end{tabular}
\end{minipage}}
\caption{Environment and base tasks.}
\end{figure}

We used \ctp{beattie2016deepmind} DeepMind Lab platform to design our 3D environment, which works as follows. The agent finds itself in a room full of objects of different types. There are five instances of each object type: ``TV'', ``ball'', ``hat'', and ``balloon''. Whenever the agent hits an object it picks it up and another object of the same type appears at a random location in the room. This process goes on for a minute, after which a new episode starts.

The type of an object determines the reward associated with it; thus, a task is defined (and can be referred to) by four numbers indicating the rewards attached to each object type. 
For example, in task \taskexample\ the agent is interested in objects of the first type and should avoid objects of the second type, while the other object types are irrelevant.
We defined base tasks that will be used by Algorithm~\ref{alg:sfgpi_basis} to build $\tvphi$ and \tPsi\ (see Figure~\ref{fig:base_tasks}). The transfer ability of the algorithms will be assessed  on different, unseen, tasks, referred to as \emph{test tasks}.

 \subsection{Agents}

 The \sfgpi\ agent adopted in the experiments is a variation of Algorithms~\ref{alg:sfgpi} and~\ref{alg:sfgpi_basis} that uses \ctp{watkins89learning} $Q(\lambda)$ to apply $Q$-learning with eligibility traces. The functions $\tvphi$ and $\tPsi$ are computed by a deep neural network whose architecture is shown in Figure~\ref{fig:architecture}. The network is composed of three parts. The first one uses the history of observations and actions up to time $t$, $h_t$, to compute a state signal $\tilde{s}_t = f(h_t)$. The construction of $\tilde{s}_t$ can itself be broken into two stages corresponding to specific functional modules: a convolutional network (CNN) to handle the pixel-based observation $o_t$ and a long short-term network (LSTM) to compute $f(h_t)$ in a recursive way~\cp{lecun1998gradient,hochreiter1997long}. The second part of the network is composed of $D+1$ specialised blocks that receive $\tilde{s}_t$ as input and compute $\tvphi(\tilde{s}_t, a)$ and $\tvpsi^{\pi_i}(\tilde{s}_t, a)$ for all $a \in \A$. Each one of these blocks is a multilayer perceptron (MLP) with a single hidden layer~\cite{rumelhart86learning}. 
The third part of the network is simply \wt, which combined with $\tvphi$ and $\tvpsi^{\pi_i}$ will provide the final approximations.

\begin{figure}
\begin{center}
\includegraphics[scale=0.28]{./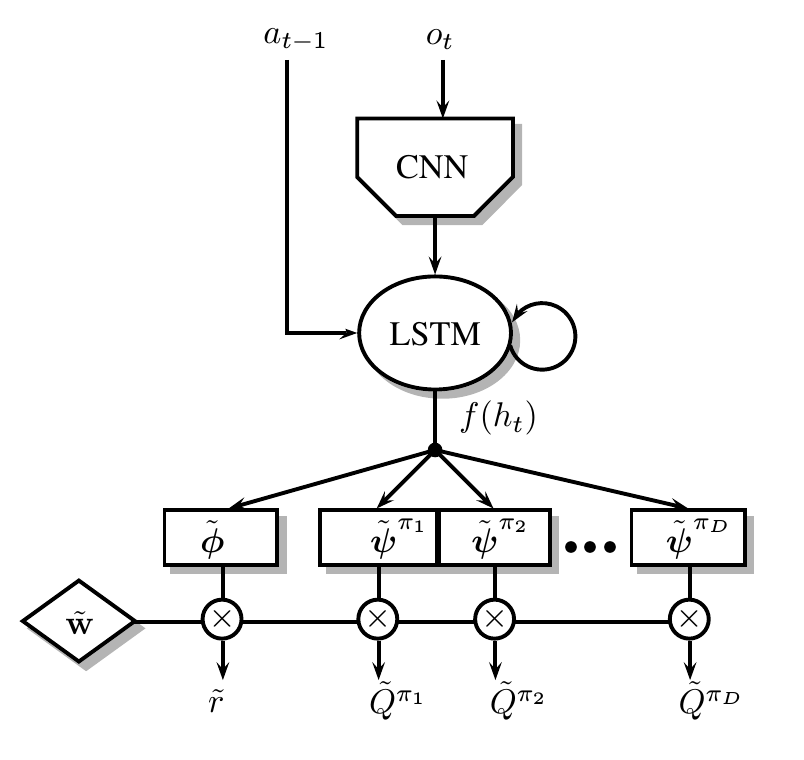}
\end{center}
\caption{Deep architecture used. Rectangles represent MLPs. \label{fig:architecture} }
\end{figure}

\begin{figure*}
\centering
\newcommand{\scl}{0.49}
\newcommand{\wdt}{0}
\subfigure[Base tasks \label{fig:res_base_tasks}]{
\includegraphics[scale=\scl]{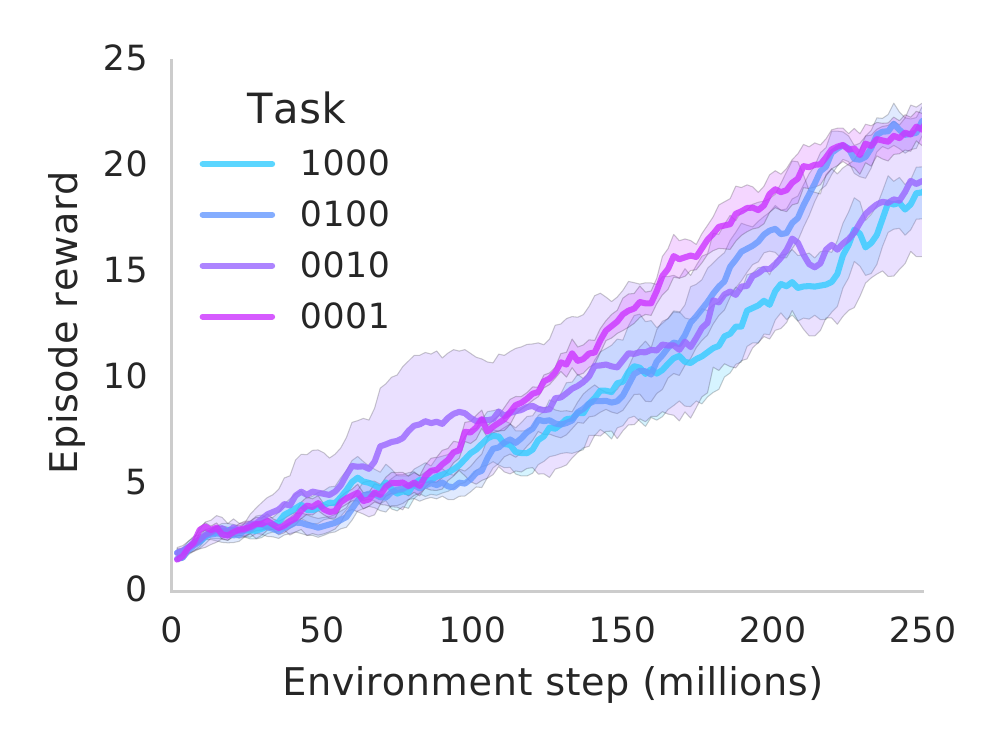}
 }
 \subfigure[Task \taskfive\ \label{fig:test_5}]{
 \includegraphics[scale=\scl]{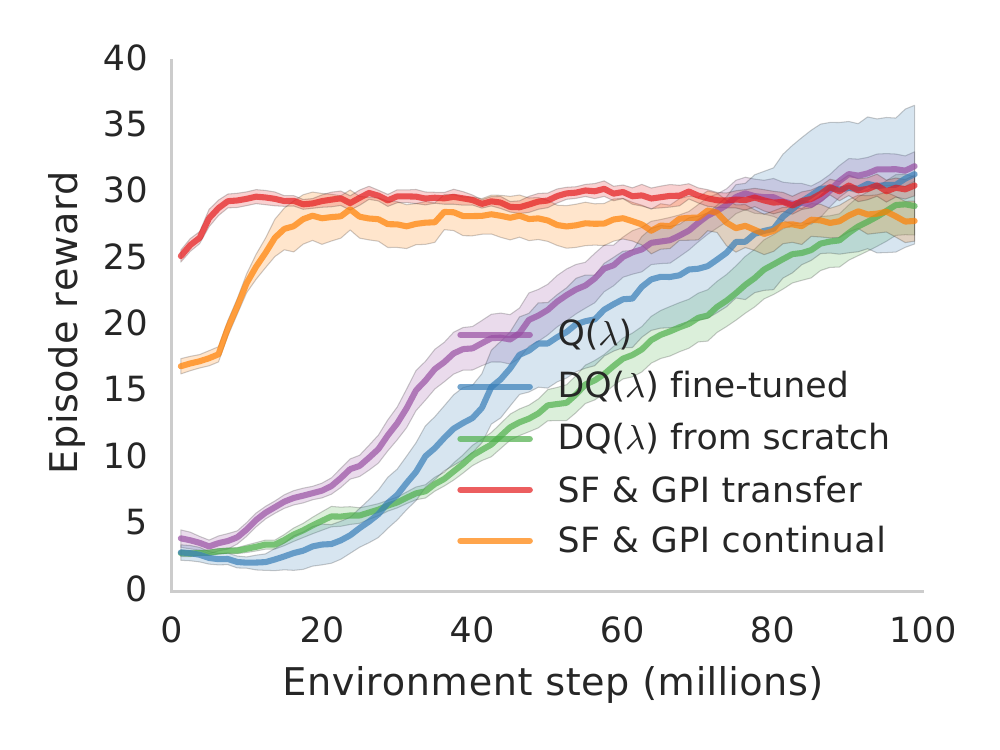}
 }
\subfigure[Task \tasknine\ \label{fig:test_9}]{
\includegraphics[scale=\scl]{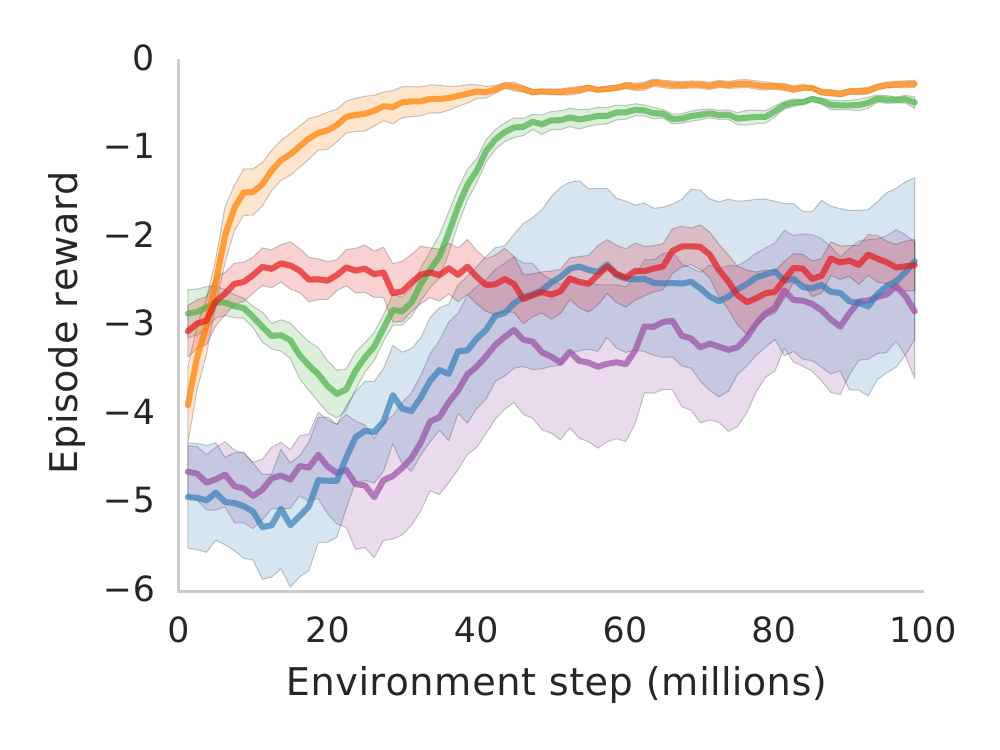}
}
 
 \subfigure[Task \taskeleven\ \label{fig:test_10}]{
 \includegraphics[scale=\scl]{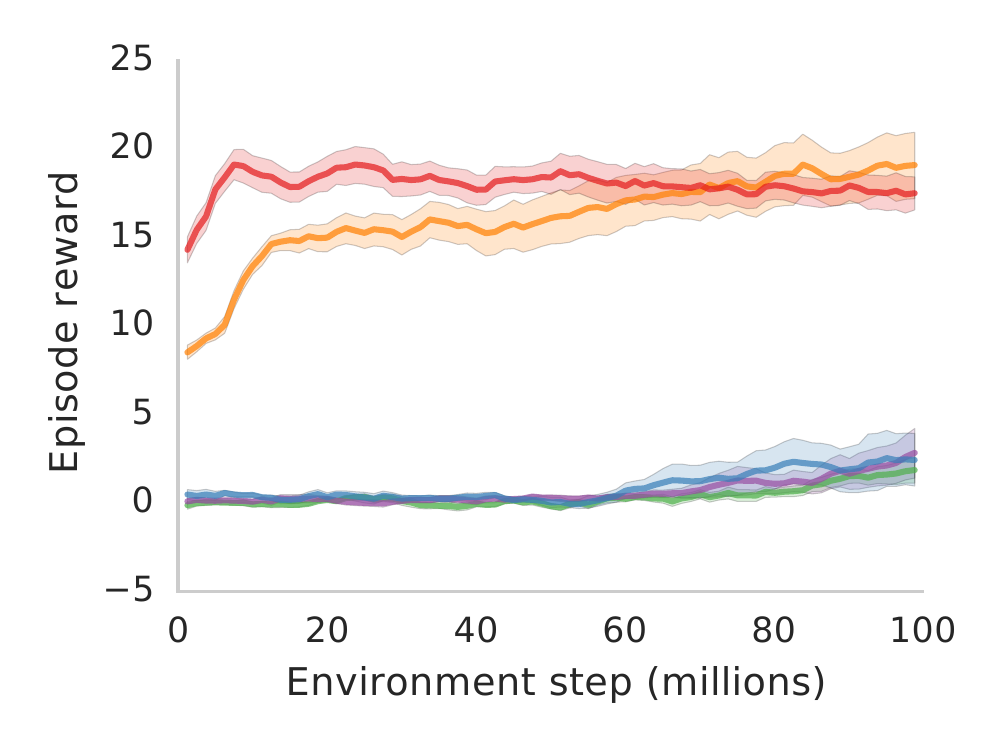}
 }
\subfigure[Task \tasktwelve\ \label{fig:test_12}]{
\includegraphics[scale=\scl]{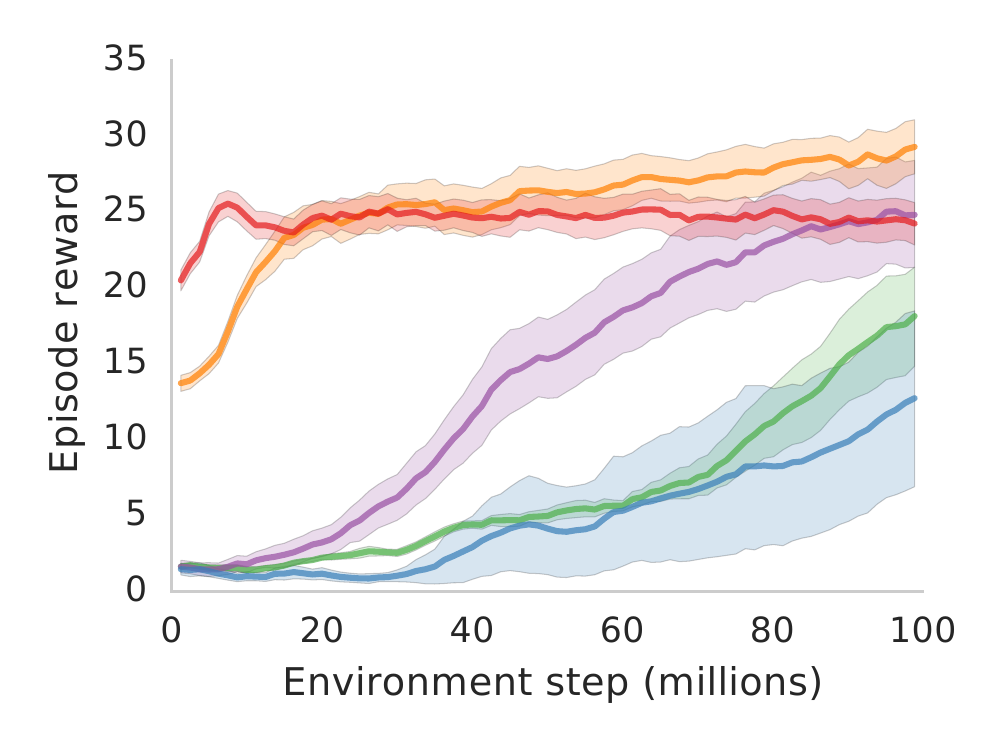}
}
\subfigure[Task \taskthirteen\ \label{fig:test_13}]{
\includegraphics[scale=\scl]{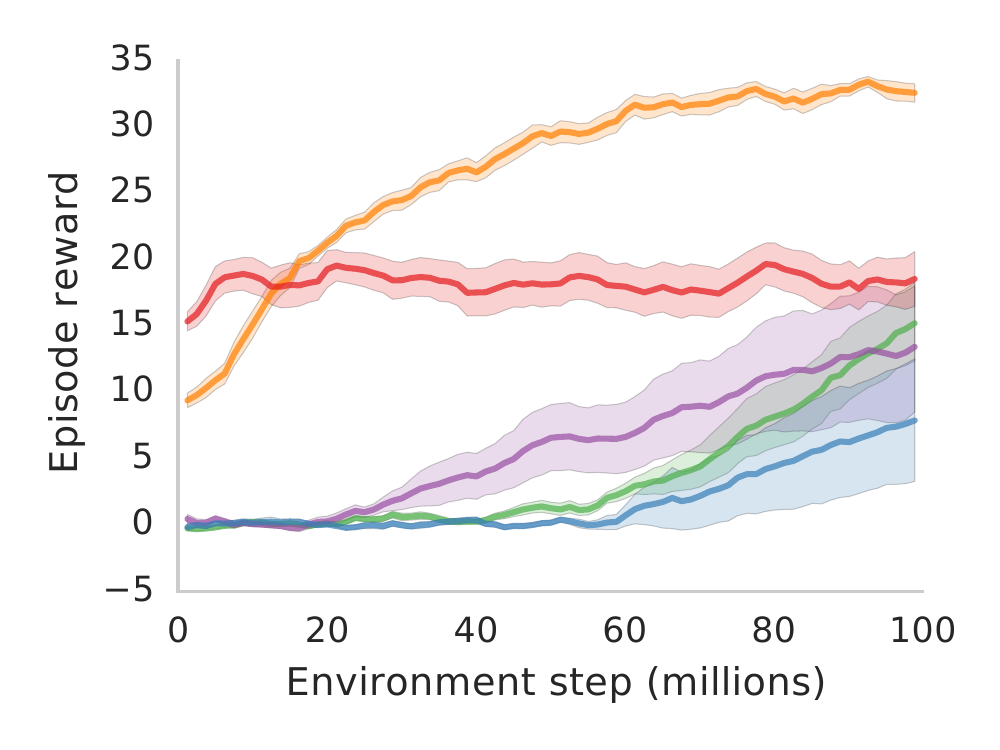}
}
\caption{Average reward per episode on the base tasks and a selection of test tasks. The $x$ axes have different scales because the amount of reward available changes across tasks. Shaded regions are one standard deviation over $10$ runs.
\label{fig:results}}
\vspace{-3mm}
\end{figure*}

The entire architecture was trained end-to-end through Algorithm~\ref{alg:sfgpi_basis} using the base tasks shown in Figure~\ref{fig:base_tasks}. After the \sfgpi\ agent had been trained it was tested on a test task, now using Algorithm~\ref{alg:sfgpi} with the newly-learned \tvphi\ and \tPsi. 
In order to handle the large number of sample trajectories needed in our environment both Algorithms~\ref{alg:sfgpi} and~\ref{alg:sfgpi_basis} used the IMPALA distributed   architecture~\cp{espehold2018impala}. 

Algorithm~\ref{alg:sfgpi} was run with and without the learning of a specialised policy (controlled via the variable \eb). We call the corresponding versions of the algorithm ``\sfgpirl'' and ``\sfgpisl'', respectively. 
We compare \sfgpi\ with baseline agents that use the same network architecture, learning algorithm, and distributed data processing. The only difference is in the way the network shown in Figure~\ref{fig:architecture} is updated and used during the test phase. Specifically, we ignore the MLPs used to compute $\tvphi$ and $\tvpsi^{\pi_i}$ and instead add another MLP, with the exact same architecture, to be jointly trained with \wt\ through $Q(\lambda)$. We then distinguish three baselines. The first one uses the state signal $\tilde{s}_t = f(h_t)$ learned in the base tasks to compute an approximation $\Qt(\tilde{s}, a)$---that is, both the CNN and the LSTM are fixed. We will refer to this method simply as $Q(\lambda)$. The second baseline is allowed to modify $f(h_t)$ during test, so we call it ``$DQ(\lambda)$ fine tuning'' as a reference to its deep architecture. Finally, the third baseline, ``$DQ(\lambda)$ from scratch'', learns its own representation $f(h_t)$.

\subsection{Results and discussion}
\label{sec:results}

Figure~\ref{fig:results} shows the results of \sfgpi\ and the baselines on a selection of test tasks. The first thing that stands out in the figures is the fact that \sfgpisl\ learns very good policies for the test tasks almost instantaneously. In fact, as this version of the algorithm is solving a simple supervised learning problem, its learning progress is almost imperceptible at the scale the RL problem unfolds. Since the baselines \emph{are} solving the full RL problem, in some tasks their performance eventually reaches, or even surpasses, that of the transferred policies. \sfgpirl\ combines the desirable properties of both \sfgpisl\ and the baselines. On one hand, it still benefits from the instantaneous transfer promoted by \sfgpi. On the other hand, its performance keeps improving, since in this case the transferred policy is used to learn a policy specialised to the current task. As a result, \sfgpirl\ outperforms the other methods in almost all of the tasks.\footnote{A video of \sfgpisl\ is included as a supplement, and can also be found on this link: \href{https://youtu.be/-dTnqfwTRMI}{https://youtu.be/-dTnqfwTRMI}.}

Another interesting trend shown in Figures~\ref{fig:results} is the fact that \sfgpi\ performs well on test tasks with negative rewards---in some cases considerably better than the alternative methods---, even though the agent only experienced positive rewards in the base tasks. This is an indication that the transferred GPI policy is combining the policies $\pi_i$ for the base tasks in a non-trivial way (line~\ref{it:gpi} of Algorithm~\ref{alg:sfgpi}).

In this paper we argue that \sfgpi\ can be applied even if assumption~(\ref{eq:reward}) is not strictly satisfied. In order to illustrate this point, we reran the experiments with \sfgpisl\ using a set of linearly-depend base tasks, $\MMtr' \equiv \{ {\tt 1000}, {\tt 0100}, {\tt 0011}, {\tt 1100} \}$. Clearly, $\MMtr'$ can only represent tasks in which the rewards associated with the third and fourth object types are the same. We thus fixed the rewards associated with the first two object types and compared the results of \sfgpisl\ using $\MMtr$ and $\MMtr'$ on several tasks where this is not the case. 
The comparison is shown in Figure~\ref{fig:basis_comparison}. As shown in the figure, although using a linearly-dependent set of base tasks does hinder transfer in some cases, in general it does not have a strong impact on the results. This smooth degradation of the performance is in accordance with Proposition~\ref{teo:beyond_linearity}. 

The result above also illustrates an interesting distinction between the space of reward functions and the associated space of policies. Although we want to be able to represent the reward functions of all tasks of interest, this does not guarantee that the resulting GPI policy will perform well. To see this, suppose we replace the positive rewards in \MMtr\ with negative ones. Clearly, in this case we would still have a basis spanning the same space of rewards; however, since now a policy that stands still is optimal in all tasks $M_i$, we should not expect GPI to give rise to good policies in tasks with positive rewards. One can ask how to define a ``behavioural basis'' that leads to good policies across \MM\ through GPI. We leave this as an interesting open question.

\begin{figure}
\begin{center}
\includegraphics[scale=0.38]{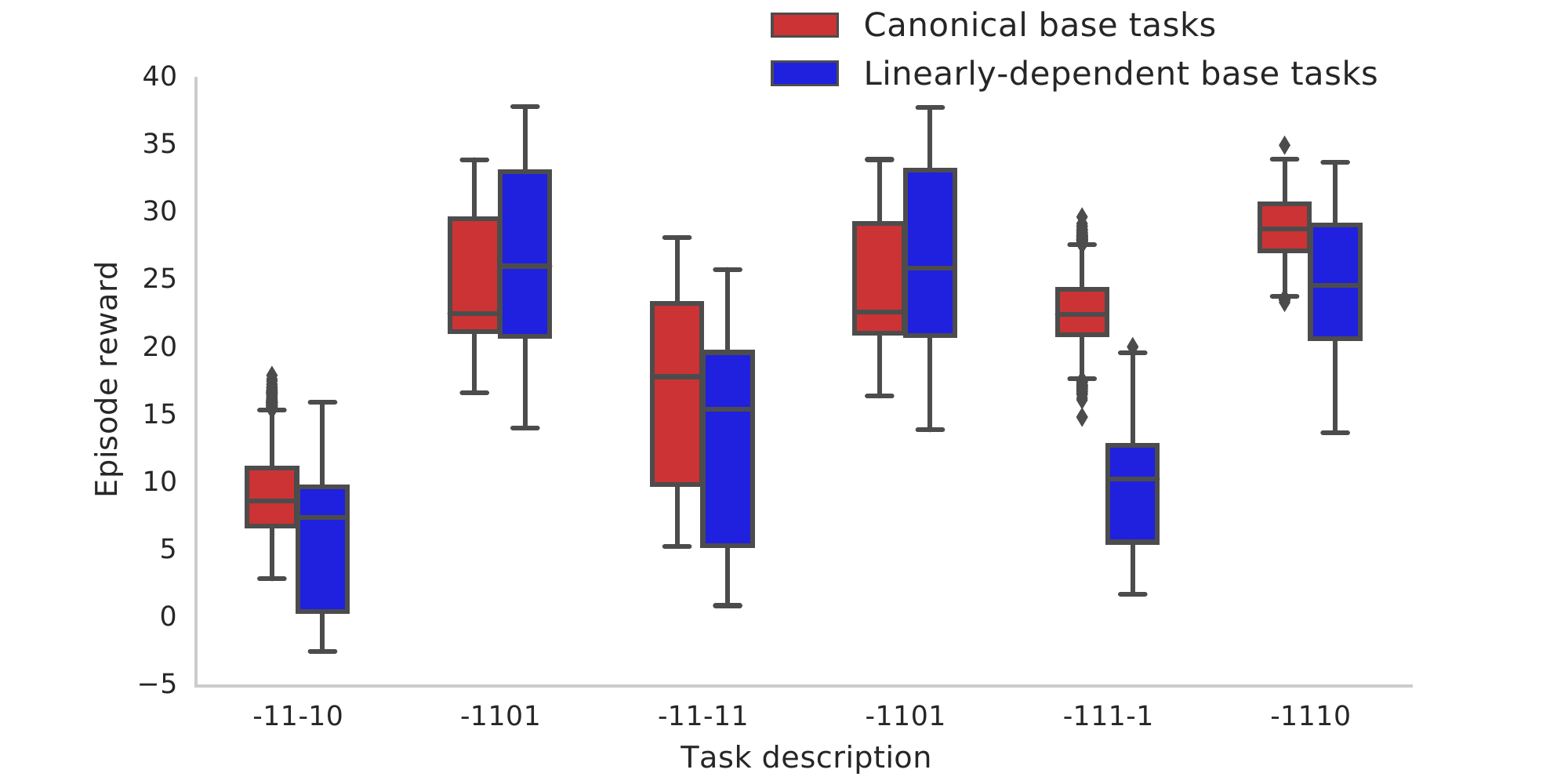}
\end{center}
\vspace{-2mm} 
\caption{Performance of \sfgpisl\ using base tasks $\MMtr$ and $\MMtr'$. The box plots summarise the distribution of the rewards received per episode between $50$ and $100$ million steps of learning. 
\label{fig:basis_comparison} }
\vspace{-2mm}
\end{figure}

\section{Related work}
\label{sec:related}

Recently, there has been a resurgence of the subject of transfer in the deep RL literature.
\ct{whye2017distral} propose an approach for the multi-task problem---which in our case is the learning of base tasks---that uses a shared policy as a regulariser for specialised policies. \ct{finn2017model} propose to achieve fast transfer, which is akin to \sfgpisl, by focusing on adaptability, rather than performance, during learning. \ct{rusu2016progressive} and \ct{kirkpatrick2016overcoming} introduce neural-network architectures well-suited for continual learning. There have also been previous attempts to combine SFs and deep learning for transfer, but none of them used GPI \cp{kulkarni2016deep,zhang2016deep}. 

Many works on the combination of deep RL and transfer propose modular network architectures that naturally induce a decomposition of the problem~\cp{devin2016learning,hees2017emergence,clavera2017policy}. Among these, a recurrent theme is the existence of sub-networks specialised in different skills that are managed by another network~\cp{hees2016learning,frans2017meta,oh2017zero}. This highlights an interesting connection between transfer learning and hierarchical RL, which has also recently re-emerged in the deep RL literature~\cp{vezhnevets2017feudal,bacon2017option}.

\vspace{-1mm}
\section{Conclusion}
\label{sec:conclusion}
\vspace{-1mm}

In this paper we extended the \sfgpi\ transfer framework in two ways. First, we showed that the theoretical guarantees supporting the framework can be extended to any set of MDPs that only differ in the reward function, regardless of whether their rewards can be computed as a linear combination of a set of features or not. In order to use \sfgpi\ in practice we still need the reward features, though; our second contribution is to show that these features can be the reward functions of a set of MDPs. This reinterpretation of the problem makes it possible to combine \sfgpi\ with deep learning in a stable way. We empirically verified this claim on a complex 3D environment that requires hundreds of millions of transitions to be solved. We showed that, by turning an RL task into a supervised learning problem, \sfgpisl\ is able to provide skilful, non-trivial, policies almost instantaneously. We also showed how these policies can be used by \sfgpirl\ to learn specialised policies, which can then be added to the agent's set of skills. 
Together, these concepts can help endow an agent with the ability to build, refine, and use a set of skills while interacting with the environment.

\newpage

\section*{Acknowledgements}

The authors would like to thank Will Dabney and Hado van Hasselt for the invaluable discussions during the development of this paper. We also thank the anonymous reviewers, whose comments and suggestions helped to improve the paper considerably.

\bibliographystyle{icml2018}

\newpage
\appendix

\onecolumn
\thispagestyle{empty}

\vspace{7mm} 
\begin{center}
\vspace{7mm} 
\noindent\makebox[\textwidth]{\rule{\textwidth}{0.5pt}} \\
\vspace{3mm} 
{\bf {\LARGE  Transfer in Deep Reinforcement Learning
Using Successor Features and Generalised Policy Improvement} \\
\vspace{2mm} {\Large Supplementary Material} }
\vspace{5mm} 
\noindent\makebox[\textwidth]{\rule{\textwidth}{1.0pt}}

  {\bf Andr\'e Barreto}, \hspace{0.5mm}
  {\bf Diana Borsa}, \hspace{0.5mm}
  {\bf John Quan}, \hspace{0.5mm}
  {\bf Tom Schaul}, \hspace{0.5mm} 
  {\bf David Silver}, \hspace{0.5mm} \\
  {\bf Matteo Hessel}, \hspace{0.5mm}
  {\bf Daniel Mankowitz} \vspace{0.5mm}  
  {\bf Augustin \v{Z}\'idek}, \hspace{0.5mm}
  {\bf R\'emi Munos}, \hspace{1mm}
    \texttt{\small \{andrebarreto,borsa,johnquan,schaul,davidsilver,\\ mtthss,dmankowitz,augustinzidek,munos\}@google.com} \vspace{1.5mm} \\
  DeepMind 
  \vspace{3mm}
\end{center}

\begin{abstract}
In this supplement we give details of the theory and experiments that had to 
be left out of the main paper due to the space limit. For the convenience of the reader the statements of the theoretical results are reproduced before the respective proofs. We also report additional empirical analysis 
that could not be included in the paper.
The citations in this supplement refer to the references listed in the main paper.
\end{abstract}

\section{Proof of theoretical results}
\label{sec:new}

We restate \ctp{barreto2017successor} GPI theorem to be used as a reference in the derivations that follow.

\begin{theorem}
 \label{teo:gpi}
{\bf (Generalized Policy Improvement)} Let $\pi_1$, $\pi_2$, ..., $\pi_n$ be 
$n$ 
decision policies and 
let $\Qt^{\pi_1}$, $\Qt^{\pi_2}$, ..., $\Qt^{\pi_n}$ be approximations of their 
respective action-value functions
such that 
\begin{equation*}
\label{eq:epsilon2}
|Q^{\pi_i}(s,a) - \Qt^{\pi_i}(s,a)| \le \epsilon \, \text{ for all } s \in S, a 
\in A, \text{ and }  i \in \{1, 2, ..., n\}.
\end{equation*}
 Define
\begin{equation*}
\label{eq:pitmax2}
\pitmax(s) \in \mathop{\argmax}_a \max_i \Qt^{\pi_i}(s,a).
\end{equation*}
Then, 
\begin{equation*}
\label{eq:Qpitmax2}
\Qpitmax(s,a)  \ge \max_i Q^{\pi_i}(s,a) - \dfrac{2}{1 - \gamma} \epsilon
\end{equation*}
for any $s \in S$ and any $a \in A$,
where \Qpitmax\ is the action-value function of \pitmax.
\end{theorem}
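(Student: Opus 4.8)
The plan is to prove the bound through the Bellman evaluation operator associated with $\pitmax$ and its contraction and monotonicity properties, following the template of the classical policy improvement theorem while carefully tracking the approximation error $\epsilon$. Let $\Tpitmax$ denote the operator $(\Tpitmax Q)(s,a) = \E_{s'}\!\left[ r(s,a,s') + \gamma\, Q(s', \pitmax(s')) \right]$, where the expectation is over $s' \sim p(\cdot\,|\,s,a)$. This operator is a $\gamma$-contraction in $\|\cdot\|_{\infty}$ whose unique fixed point is $\Qpitmax$; it is monotone (if $Q_1 \ge Q_2$ pointwise then $\Tpitmax Q_1 \ge \Tpitmax Q_2$) and satisfies the shift identity $\Tpitmax(Q + c) = \Tpitmax Q + \gamma c$ for any constant $c$. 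I write $\Qtmax(s,a) = \max_i \Qt^{\pi_i}(s,a)$, so that by construction $\pitmax$ is greedy with respect to $\Qtmax$.

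The key step, and the main obstacle, is a one-step inequality: $\Tpitmax \Qtmax \ge \Qtmax - (1+\gamma)\epsilon$ pointwise. To prove it at a fixed $(s,a)$, I let $k$ be an index attaining $\Qtmax(s,a) = \Qt^{\pi_k}(s,a)$. The hypothesis gives $\Qt^{\pi_k}(s,a) \le Q^{\pi_k}(s,a) + \epsilon$, and unrolling one step of the Bellman equation for the exact $Q^{\pi_k}$ yields $Q^{\pi_k}(s,a) = \E_{s'}[r(s,a,s') + \gamma Q^{\pi_k}(s', \pi_k(s'))]$. At each successor I bound $Q^{\pi_k}(s', \pi_k(s')) \le \Qt^{\pi_k}(s', \pi_k(s')) + \epsilon \le \Qtmax(s', \pi_k(s')) + \epsilon \le \max_b \Qtmax(s', b) + \epsilon = \Qtmax(s', \pitmax(s')) + \epsilon$, the last equality being exactly where the greediness of $\pitmax$ for $\Qtmax$ is used. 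Combining the two error sources---one $\epsilon$ at $(s,a)$ and a discounted $\gamma\epsilon$ over the successors---gives $\Qtmax(s,a) \le (\Tpitmax \Qtmax)(s,a) + (1+\gamma)\epsilon$, the claimed inequality.

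With the one-step inequality established, I would iterate. Applying $\Tpitmax$ repeatedly and using monotonicity together with the shift identity, an induction on $m$ gives $(\Tpitmax)^m \Qtmax \ge \Qtmax - (1+\gamma)\epsilon \sum_{t=0}^{m-1}\gamma^{t}$. Letting $m \to \infty$ and invoking $(\Tpitmax)^m \Qtmax \to \Qpitmax$ (the contraction property), I obtain $\Qpitmax \ge \Qtmax - \tfrac{1+\gamma}{1-\gamma}\epsilon$.

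Finally I would pass from $\Qtmax$ to $\max_i Q^{\pi_i}$. The hypothesis gives $\Qtmax(s,a) = \max_i \Qt^{\pi_i}(s,a) \ge \max_i Q^{\pi_i}(s,a) - \epsilon$, so substituting and simplifying the constant via $1 + \tfrac{1+\gamma}{1-\gamma} = \tfrac{2}{1-\gamma}$ yields $\Qpitmax(s,a) \ge \max_i Q^{\pi_i}(s,a) - \tfrac{2}{1-\gamma}\epsilon$ for all $(s,a)$, as required. (The argument in fact delivers the slightly stronger statement $\Qpitmax \ge \Qtmax - \tfrac{1+\gamma}{1-\gamma}\epsilon$; the extra $\epsilon$ is the price of expressing the bound in terms of $\max_i Q^{\pi_i}$ rather than $\Qtmax$.)
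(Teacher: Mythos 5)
Your proof is correct and is essentially the canonical argument for this result: the paper itself restates the theorem without proof, deferring to \ct{barreto2017successor}, and your route---the one-step inequality $\Tpitmax\Qtmax \ge \Qtmax - (1+\gamma)\epsilon$ obtained from the greediness of \pitmax\ with respect to \Qtmax, iteration via monotonicity and the shift identity $\Tpitmax(Q+c) = \Tpitmax Q + \gamma c$, and a final $\epsilon$ to pass from \Qtmax\ to $\max_i Q^{\pi_i}$---matches that original proof step for step, including the sharper intermediate bound $\Qpitmax \ge \Qtmax - \tfrac{1+\gamma}{1-\gamma}\epsilon$. No gaps.
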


\vspace{5mm}

\begin{lemma}
\label{teo:bound_pair2}
Let $\rdif_{ij} = \max_{s,a} \left|r_i(s,a) - r_j(s,a)\right|$ and let $\pi$ be an arbitrary policy. Then,
\begin{equation*}
\label{eq:bound_pair2}
|Q_{i}^{\pi}(s,a) - Q^{\pi}_j(s,a)| \le \dfrac{\rdif_{ij}}{1 - \gamma}.
\end{equation*}
\end{lemma}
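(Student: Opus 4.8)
The plan is to exploit that $Q_i^\pi$ and $Q_j^\pi$ are action-value functions of the \emph{same} policy $\pi$ under identical dynamics $p$ and discount $\gamma$, differing only through the immediate rewards $r_i$ and $r_j$. Since both obey the Bellman evaluation equation for $\pi$, I would first write, for every $(s,a)$,
\[
Q_i^\pi(s,a) = r_i(s,a) + \gamma\, \E^\pi\!\left[ Q_i^\pi(S_{t+1},\pi(S_{t+1})) \mid S_t=s, A_t=a \right],
\]
together with the analogous identity for $j$. The crucial observation is that the conditional expectation in both identities is taken with respect to the \emph{same} next-state distribution $p(\cdot\mid s,a)$, so subtracting the two equations lets the transition terms combine into a single expectation of the difference $Q_i^\pi - Q_j^\pi$.

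Next I would take absolute values, apply the triangle inequality, bound the reward gap by $|r_i(s,a)-r_j(s,a)|\le \rdif_{ij}$, and bound the expectation of the difference by its sup norm, yielding
\[
\left| Q_i^\pi(s,a) - Q_j^\pi(s,a)\right| \le \rdif_{ij} + \gamma\, \| Q_i^\pi - Q_j^\pi\|_\infty .
\]
Taking the maximum over $(s,a)$ on the left-hand side gives $(1-\gamma)\,\| Q_i^\pi - Q_j^\pi\|_\infty \le \rdif_{ij}$, which rearranges to the claimed bound $\| Q_i^\pi - Q_j^\pi\|_\infty \le \rdif_{ij}/(1-\gamma)$.

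Equivalently, one could avoid the fixed-point argument and expand each value function directly as $Q_\bullet^\pi(s,a)=\E^\pi[\sum_{k\ge 0}\gamma^k r_\bullet(S_{t+k},A_{t+k})\mid S_t=s,A_t=a]$ over the \emph{same} trajectory distribution induced by $\pi$ and $p$; the difference is then the discounted sum of the termwise reward gaps, each bounded by $\rdif_{ij}$, and the geometric series $\sum_{k\ge 0}\gamma^k \rdif_{ij} = \rdif_{ij}/(1-\gamma)$ delivers the result immediately. I expect no genuinely hard step here; the only point requiring care is that the two expectation operators are literally identical (same policy, same dynamics), so the transition contributions cancel cleanly and the recursion is a $\gamma$-contraction in the sup norm rather than something that accumulates across time.
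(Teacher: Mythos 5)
Your argument is correct and is essentially identical to the paper's proof: both subtract the two Bellman evaluation equations for $\pi$ (which share the same transition kernel), apply the triangle inequality to isolate the reward gap $\rdif_{ij}$, and solve the resulting recursion $\Delta_{ij} \le \rdif_{ij} + \gamma \Delta_{ij}$ for the sup-norm difference. The alternative geometric-series expansion you mention is also valid but is not what the paper uses.
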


\begin{proof}
Define $\Delta_{ij} = \max_{s,a} |Q_i^{\pi}(s,a) - Q^{\pi}_j(s,a)|$.
Then,
\begin{align}
\label{eq:der_sub_bound2}
|Q_i^{\pi}(s,a) - Q_j^{\pi}(s,a)|
\nonumber & = \left|r_i(s,a) + \gamma \sum_{s'} p(s'|s,a) Q_i^{\pi}(s',\pi(s')) 
- r_j(s,a) - \gamma \sum_{s'} p(s'|s,a) Q_j^{\pi}(s',\pi(s')) \right| \\
\nonumber & = \left|r_i(s,a) - r_j(s,a) + \gamma \sum_{s'} p(s'|s,a) 
\left(Q_i^{\pi}(s',\pi(s')) - Q_j^{\pi}(s', \pi (s')) \right) \right| \\
\nonumber & \le \left|r_i(s,a) - r_j(s,a)\right| + \gamma \sum_{s'} p(s'|s,a) 
\left| 
 Q_i^{\pi}(s',\pi(s')) - Q_j^{\pi}(s',\pi(s')) \right| \\
& \le \rdif_{ij} + \gamma \Delta_{ij}. 
\end{align}
Since~(\ref{eq:der_sub_bound2}) is valid for any $s,a \in S \times A$, we have 
shown that $\Delta_{ij} \le \rdif_{ij} + \gamma \Delta_{ij}$.
Solving for $\Delta_{ij}$ we get 
\begin{equation*}
\Delta_{ij} \le \dfrac{1}{1- \gamma} \rdif_{ij}.
\end{equation*}
\end{proof}

\begin{lemma}
\label{teo:bound_pair3}
Let $\rdif_{ij} = \max_{s,a} \left|r_i(s,a) - r_j(s,a)\right|$. Then,
\begin{equation*}
\label{eq:bound_pair3}
|Q_{i}^{\pi^*_i}(s,a) - Q^{\pi^*_j}_j(s,a)| \le \dfrac{\rdif_{ij}}{1 - \gamma}.
\end{equation*}
\end{lemma}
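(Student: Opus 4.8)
The plan is to reduce this to Lemma~\ref{teo:bound_pair2} by inserting a ``cross'' term in which a policy optimal for one MDP is evaluated in the other, and then to exploit optimality to control the error in a single direction; symmetry will then handle the absolute value.

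Concretely, I would first drop the absolute value and bound $Q_i^{\pi^*_i}(s,a) - Q_j^{\pi^*_j}(s,a)$ from above. The key observation is that $\pi^*_j$ is \emph{optimal} in $M_j$, so its value function dominates that of any other policy evaluated in $M_j$; in particular $Q_j^{\pi^*_j}(s,a) \ge Q_j^{\pi^*_i}(s,a)$ for all $(s,a) \in \S \times \A$. Subtracting a smaller quantity only increases the difference, so
\begin{equation*}
Q_i^{\pi^*_i}(s,a) - Q_j^{\pi^*_j}(s,a) \le Q_i^{\pi^*_i}(s,a) - Q_j^{\pi^*_i}(s,a).
\end{equation*}
The right-hand side now compares the \emph{same} policy $\pi^*_i$ evaluated in the two MDPs $M_i$ and $M_j$, which is exactly the situation covered by Lemma~\ref{teo:bound_pair2}. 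Applying that lemma with the fixed policy $\pi = \pi^*_i$ gives $|Q_i^{\pi^*_i}(s,a) - Q_j^{\pi^*_i}(s,a)| \le \rdif_{ij}/(1-\gamma)$, hence $Q_i^{\pi^*_i}(s,a) - Q_j^{\pi^*_j}(s,a) \le \rdif_{ij}/(1-\gamma)$.

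To obtain the reverse inequality I would repeat the argument with the roles of $i$ and $j$ exchanged: using optimality of $\pi^*_i$ in $M_i$ to write $Q_i^{\pi^*_i}(s,a) \ge Q_i^{\pi^*_j}(s,a)$, and then invoking Lemma~\ref{teo:bound_pair2} with the fixed policy $\pi = \pi^*_j$. Since $\rdif_{ij} = \rdif_{ji}$ by definition, this yields $Q_j^{\pi^*_j}(s,a) - Q_i^{\pi^*_i}(s,a) \le \rdif_{ij}/(1-\gamma)$. Combining the two one-sided bounds establishes the claimed inequality for every $(s,a)$.

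I do not anticipate a serious obstacle here, as the estimate follows from a short chain of inequalities; the only point requiring care is the bookkeeping of which policy is optimal in which MDP when introducing the cross term $Q_j^{\pi^*_i}$ (respectively $Q_i^{\pi^*_j}$), since the direction of the optimality inequality is precisely what makes the reduction to Lemma~\ref{teo:bound_pair2} work. Everything else is routine, and the factor $1/(1-\gamma)$ is inherited directly from Lemma~\ref{teo:bound_pair2} rather than re-derived.
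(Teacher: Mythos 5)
Your proof is correct, but it takes a genuinely different route from the paper's. The paper proves Lemma~\ref{teo:bound_pair3} from scratch by a second contraction argument: it writes both $Q_i^{\pi^*_i}$ and $Q_j^{\pi^*_j}$ via the Bellman \emph{optimality} equation, uses $\left|\max_b f(b) - \max_b g(b)\right| \le \max_b \left|f(b)-g(b)\right|$ to pass the absolute value inside the max, and solves the resulting recursion $\Delta_{ij} \le \rdif_{ij} + \gamma\Delta_{ij}$ --- essentially repeating the proof of Lemma~\ref{teo:bound_pair2} with $\max_b$ in place of $\pi(s')$. You instead reduce the statement to Lemma~\ref{teo:bound_pair2} by a sandwich argument: the optimality of $\pi^*_j$ in $M_j$ gives $Q_j^{\pi^*_j} \ge Q_j^{\pi^*_i}$ pointwise, so the two-policy difference is dominated by a single-policy difference to which Lemma~\ref{teo:bound_pair2} applies directly, and symmetry in $i$ and $j$ handles the other direction. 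Both arguments are valid and yield the identical constant $1/(1-\gamma)$; yours is more modular and avoids re-deriving the contraction (the only point to note is that Lemma~\ref{teo:bound_pair2} is phrased for a deterministic $\pi$, which is harmless since the $\pi^*_i$ may be taken deterministic), while the paper's is self-contained and structurally parallel to its proof of Lemma~\ref{teo:bound_pair2}.
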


\begin{proof}
To simplify the notation, let $Q^i_i (s,a) \equiv Q_{i}^{\pi^*_i}(s,a)$.
Note that 
$|Q_i^i(s,a) - Q^j_j(s,a)|$ is the difference between the value functions of 
two 
MDPs with the same transition function
but potentially different rewards. 
Let $\Delta_{ij} = \max_{s,a} |Q_i^i(s,a) - Q^j_j(s,a)|$.
Then, 
{\small
\begin{align}
\label{eq:der_sub_bound1}
|Q_i^i(s,a) - Q^j_j(s,a)|
\nonumber & = \left|r_i(s,a) + \gamma \sum_{s'} p(s'|s,a) \max_b Q^i_i(s',b) 
- r_j(s,a) - \gamma \sum_{s'} p(s'|s,a) \max_b Q^j_j(s',b) \right| \\
\nonumber & = \left|r_i(s,a) - r_j(s,a) + \gamma \sum_{s'} p(s'|s,a) 
\left(\max_b Q^i_i(s',b) 
- \max_b Q^j_j(s',b) \right) \right| \\
\nonumber & \le \left|r_i(s,a) - r_j(s,a)\right| + \gamma \sum_{s'} p(s'|s,a) 
\left| 
\max_b Q^i_i(s',b) - \max_b Q^j_j(s',b) \right| \\
\nonumber & \le \left|r_i(s,a) - r_j(s,a)\right| + \gamma \sum_{s'} p(s'|s,a) 
\max_b \left| Q^i_i(s',b) - Q^j_j(s',b) \right| \\
& \le \rdif_{ij} + \gamma \Delta_{ij}. 
\end{align}
}
Since~(\ref{eq:der_sub_bound1}) is valid for any $s,a \in S \times A$, we have 
shown that 
$\Delta_{ij} \le \rdif_{ij} + \gamma \Delta_{ij}$.
Solving for $\Delta_{ij}$ we get 
\begin{equation*}
\Delta_{ij} \le \dfrac{1}{1- \gamma} \rdif_{ij}.
\end{equation*}
\end{proof}

\setcounter{proposition}{0}
\begin{proposition}
Let $M \in \MM$ and let $Q^{\pi_j^{*}}_i$ be the action-value function of an optimal 
policy of $M_j \in \MM$ when executed in $M_i \in \MM$. Given approximations 
$\{\Qt^{\pi_1}_i, \Qt^{\pi_2}_i, ..., \Qt^{\pi_n}_i\}$ such that 
$
 \left|Q^{\pi_j^{*}}_i(s, a) - \Qt^{\pi_j}_i(s,a) \right| \le \epsilon
$
for all $s \in \S$, $a \in \A$, and $j \in \{1, 2, ..., n\}$,
let 
\begin{equation*}
\pihmax(s) \in \argmax_a {\max}_j \Qt^{\pi_j}_i (s,a).
\end{equation*}
Then,
\begin{equation*}
\|Q^{*} - Q^{\pi}\|_{\infty} 
\le \dfrac{2}{1-\gamma} \left( 
\| r - r_i \|_{\infty} +
\mathop{\min}_j \|r_i - r_j \|_{\infty} + \epsilon
\right),
\end{equation*}
where $Q^*$ is the optimal value function of $M$, $Q^\pi$ is the value function of 
$\pi$ in $M$, and $\|f - g\|_\infty = \max_{s,a}|f(s,a) - g(s,a)|$.
\end{proposition}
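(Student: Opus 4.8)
The plan is to interpolate several value functions between the optimal value $Q^*$ of the target task $M$ and the value $Q^\pi$ of the GPI policy evaluated in $M$, and to control each resulting gap with one of the two pairwise lemmas or the GPI theorem. Write $Q^*_i \equiv Q^{\pi_i^*}_i$ for the optimal value function of the reference task $M_i$, and let $Q^\pi_i$ denote the GPI policy $\pi$ evaluated in $M_i$. Because $Q^*(s,a) \ge Q^\pi(s,a)$ everywhere (optimality in $M$), it suffices to upper-bound $Q^*(s,a) - Q^\pi(s,a)$ uniformly in $(s,a)$. First I would introduce the exact decomposition
\[
Q^* - Q^\pi = (Q^* - Q^*_i) + (Q^*_i - \max_j Q^{\pi_j^*}_i) + (\max_j Q^{\pi_j^*}_i - Q^\pi_i) + (Q^\pi_i - Q^\pi),
\]
and bound the four terms separately.

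The first and fourth terms compare quantities across $M$ and $M_i$. The first is the difference of the optimal value functions of two MDPs that differ only in their reward (both lie in $\MM$, so they share transition structure), so Lemma~\ref{teo:bound_pair3} bounds it by $\|r-r_i\|_\infty/(1-\gamma)$. The fourth compares the \emph{same} policy $\pi$ under rewards $r_i$ and $r$, so Lemma~\ref{teo:bound_pair2} bounds it by $\|r-r_i\|_\infty/(1-\gamma)$. The third term is handled directly by the GPI theorem (Theorem~\ref{teo:gpi}) applied \emph{inside} $M_i$, with policies $\pi_j^*$, approximations $\Qt^{\pi_j}_i$, and uniform error $\epsilon$; this yields $\max_j Q^{\pi_j^*}_i - Q^\pi_i \le 2\epsilon/(1-\gamma)$.

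I expect the second term, the sub-optimality in $M_i$ of the best available base policy, to be the main obstacle. Here I would first drop the maximum downward: since $\max_j Q^{\pi_j^*}_i(s,a) \ge Q^{\pi_{j'}^*}_i(s,a)$ for every fixed index $j'$, we have $Q^*_i - \max_j Q^{\pi_j^*}_i \le Q^*_i - Q^{\pi_{j'}^*}_i$ for each $j'$. To bound the right-hand side I would insert $Q^{\pi_{j'}^*}_{j'}$ (the optimal value of $M_{j'}$) and split into $(Q^*_i - Q^{\pi_{j'}^*}_{j'}) + (Q^{\pi_{j'}^*}_{j'} - Q^{\pi_{j'}^*}_i)$: the first piece is a difference of optimal values across rewards $r_i, r_{j'}$, controlled by Lemma~\ref{teo:bound_pair3}, and the second is the fixed policy $\pi_{j'}^*$ evaluated under $r_{j'}$ versus $r_i$, controlled by Lemma~\ref{teo:bound_pair2}; together they give $2\|r_i-r_{j'}\|_\infty/(1-\gamma)$. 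As this holds for \emph{every} $j'$, I would finally pick the minimizing index to obtain $Q^*_i - \max_j Q^{\pi_j^*}_i \le 2\min_j\|r_i-r_j\|_\infty/(1-\gamma)$.

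Summing the four bounds gives $Q^* - Q^\pi \le \frac{2}{1-\gamma}\big(\|r-r_i\|_\infty + \min_j\|r_i-r_j\|_\infty + \epsilon\big)$ uniformly in $(s,a)$, which is exactly the claimed inequality since the left-hand side is nonnegative. The only delicate points are keeping the telescoping algebraically exact and making sure the triangle-inequality step in the second term routes the reference-versus-base reward gap through $M_{j'}$ without double-counting; that second term is where the essential idea of the proof lies, while everything else reduces to mechanical applications of the two lemmas and the GPI theorem.
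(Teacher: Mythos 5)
Your proof is correct and follows essentially the same route as the paper's: the same telescoping through $Q^{\pi_i^*}_i$ and $Q^{\pi}_i$, the same application of the GPI theorem to extract the $\tfrac{2\epsilon}{1-\gamma}$ term, and the same insertion of $Q^{\pi_j^*}_j$ followed by the two pairwise lemmas and a minimisation over $j$. The only cosmetic difference is that you split the paper's middle gap $Q^{\pi_i^*}_i - Q^{\pi}_i$ explicitly into a GPI piece and a base-policy suboptimality piece, which the paper handles in a single chain of inequalities.
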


\begin{proof}
The result is a direct application of Theorem~\ref{teo:gpi} and Lemmas~\ref{teo:bound_pair2} and~\ref{teo:bound_pair3}. Let $\pi^*$ be an optimal value function of $M$. Then,
\begin{equation*}
\begin{array}{llr}
Q^{*}(s,a) - Q^{\pi}(s,a) 
& =  Q^{\pi^{*}}(s,a) - Q^{\pi}(s,a) \\
& =  Q^{\pi^{*}}(s,a) - Q^{\pi^*_i}_i 
+ Q^{\pi^*_i}_i - Q^{\pi}(s,a) \\
& =  Q^{\pi^{*}}(s,a) - Q^{\pi^*_i}_i 
+ Q^{\pi^*_i}_i - Q^{\pi}_i
+ Q^{\pi}_i - Q^{\pi}(s,a) \\
& \le  |Q^{\pi^{*}}(s,a) - Q^{\pi^*_i}_i| 
+ Q^{\pi^*_i}_i - Q^{\pi}_i
+ |Q^{\pi}_i - Q^{\pi}(s,a)| \\
\end{array}
\end{equation*}
From Lemma~\ref{teo:bound_pair3}, we know that
\begin{equation*}
|Q^{\pi^{*}}(s,a) - Q^{\pi^*_i}_i| 
\le \dfrac{\max_{s,a} \left|r(s,a) - r_i(s,a)\right|}{1 - \gamma}.
\end{equation*}
From Theorem~\ref{teo:gpi} we know that, for any $j \in \{1, 2, ..., n\}$, we have
\begin{equation}
\label{eq:app_gpi}
\begin{array}{llr}
Q_i^{\pi_i^*}(s,a) - Q^{\pi}_i(s,a)  
& \le Q_i^{\pi_i^*}(s,a) - Q^{\pi_j^*}_i(s,a)  + \dfrac{2}{1 - \gamma} \epsilon 
& \text{(Theorem~\ref{teo:gpi})} \\
& = Q_i^{\pi_i^*}(s,a) - Q^{\pi_j^*}_j(s,a) + Q^{\pi_j^*}_j(s,a) - Q^{\pi_j^*}_i(s,a)  + \dfrac{2}{1 - \gamma} \epsilon \\
& \le |Q_i^{\pi_i^*}(s,a) - Q^{\pi_j^*}_j(s,a)| + |Q^{\pi_j^*}_j(s,a) - Q^{\pi_j^*}_i(s,a)|  + \dfrac{2}{1 - \gamma} \epsilon \\
& \le \dfrac{2}{1-\gamma} \max_{s,a} \left|r_i(s,a) - r_j(s,a)\right| + \dfrac{2}{1 - \gamma} \epsilon 
& \text{(Lemmas~\ref{teo:bound_pair2} and~\ref{teo:bound_pair3})}. \\
\end{array}
\end{equation}
Finally, from Lemma~\ref{teo:bound_pair2}, we know that
\begin{equation*}
|Q^{\pi}_i - Q^{\pi}(s,a)| 
\le \dfrac{\max_{s,a} \left|r(s,a) - r_i(s,a)\right|}{1 - \gamma}.
\end{equation*}
\end{proof}

\section{Details of the experiments}
\label{sec:exp_details}

In this section we give details of the experiments that had to 
be left out of the main paper due to the space limit.

\subsection{Agents's architecture}

The CNN used in Figure~\ref{fig:architecture} is identical to that used by \ctp{mnih2015human} DQN. The CNN outputs a $256$-dimensional vector that serves as the LSTM state. As shown in Figure~\ref{fig:architecture}, the LSTM also receives the previous action of the agent as an input. The output of the LSTM is a vector of dimension  $256$, which in the paper we call the state signal $\tilde{s}$. The vector $\tilde{s}$ is the input of the $D+1$ MLPs used to compute \tvphi\ and $\tvpsi^{\pi_i}$. These MLPs have $100$ $\tanh$ hidden units and an output of dimension $D \times |\A|$---that is, for each action $a \in \A$ the MLP outputs a $D$-dimensional vector representing either \tvphi\ or one of the $\tvpsi^{\pi_i}$. These $D$-dimensional vectors are then multiplied by $\wt$, leading to a $(D+1) \times |\A|$ output representing \rt\ and $\Qt^{\pi_i}$.

\subsection{Agents's training}

The losses shown in lines~\ref{it:learn_w} and~\ref{it:td} of Algorithm~\ref{alg:sfgpi} and in lines~\ref{it:learn_r} and~\ref{it:learn_q} of Algorithm~\ref{alg:sfgpi_basis} were minimised using the RMSProp method, a variation of the well-known back-propagation algorithm. As parameters of RMSProp we adopted a fixed decay rate of $0.99$ and $\epsilon = 0.01$. For all algorithms we tried at least two values for the learning rate: $0.01$ and $0.001$. For the baselines ``$DQ(\lambda)$ fine tuning'' and ``$DQ(\lambda)$ from scratch'' we also tried a learning rate of $0.005$. The results shown in the paper are those associated with the best final performance of each algorithm. 

As mentioned in the paper, the agents's training was carried out using the IMPALA architecture~\cp{espehold2018impala}. 
In IMPALA the agent is conceptually divided in two groups: ``actors'', which interact with the environment in parallel collecting trajectories and adding them to a queue, and a ``learner'', which pulls trajectories from the queue and uses them to apply the updates. 
On the learner side, we adopted a simplified version of IMPALA that uses $Q(\lambda)$ as the RL algorithm ({\sl i.e.}, no parametric representation of policies nor off-policy corrections). For the distributed collection of data we used $50$ actors per task. Each actor gathered trajectories of length $20$ that were then added to the common queue. The collection of data followed an $\epsilon$-greedy policy with a decaying $\epsilon$. Specifically, the value of $\epsilon$ started at $0.5$ and decayed linearly to $0.05$ in $10^6$ steps. The results shown in the paper correspond to the performance of the $\epsilon$-greedy policy (that is, they \emph{include} exploratory actions of the agents).

 For the results with \sfgpirl, in addition to the loss induced by equation~(\ref{eq:bellman_psi}), minimised in line~\ref{it:learn_psi} of Algorithm~\ref{alg:sfgpi}, we also used a standard $Q(\lambda)$ loss---that is, the gradients associated with both losses were combined through a weighted sum and then used to update $\params_\psi$. The weights for the standard $Q(\lambda)$ loss and the loss computed in line~\ref{it:td} of Algorithm~\ref{alg:sfgpi} were $1$ and $0.1$, respectively. Using the standard $Q(\lambda)$ loss seems to stabilise the learning of $\tvpsi^{\pi_{n+1}}$; in this case~(\ref{eq:bellman_psi}) can be seen as a constraint for the standard RL optimisation. Obviously, if we want to add $\tvpsi^{\pi_{n+1}}$ to \tPsi, we have to make sure that the SF semantics is preserved---that is, the result of the combined updates approximately satisfies (\ref{eq:bellman_psi}). We confirmed this fact by monitoring the loss computed in line~\ref{it:td} of Algorithm~\ref{alg:sfgpi}. Figure~\ref{fig:psi_loss} shows the average of this loss computed over $10$ runs of \sfgpirl\ on all test tasks; as shown in the figure, the loss is indeed minimised, which implies that the resulting $\tvpsi^{\pi_{n+1}}$ are valid SFs that can be safely added to \tPsi. 
 
\begin{figure}[h]
\centering
\includegraphics[scale=0.6]{./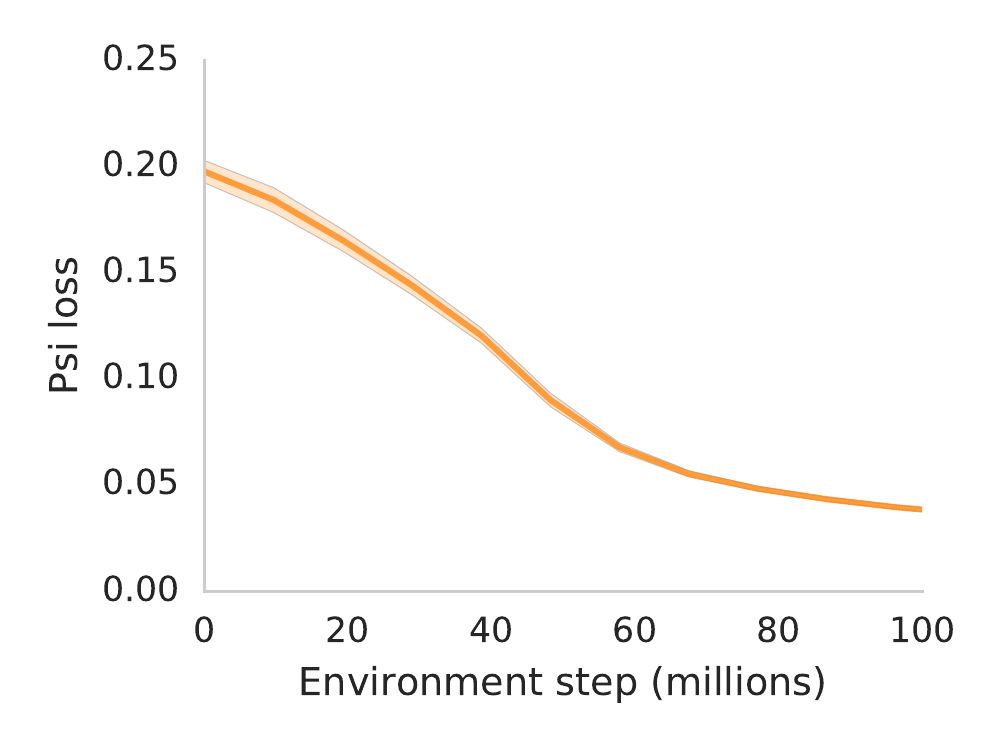}
\caption{Loss in the approximation of $\tvpsi^{\pi_{n+1}}$ (line~\ref{it:td} of Algorithm~\ref{alg:sfgpi}). Shaded region represents one standard deviation over $10$ runs on all test tasks. \label{fig:psi_loss} \vspace{5mm} }
\end{figure}

\subsection{Environment}

The room used in the environment and illustrated in Figure~\ref{fig:observation} was of size $13 \times 13$~\cp{beattie2016deepmind}.
The observations $o_t$ were an $84 \times 84$ image with pixels re-scaled to the interval $[0,1]$. The action space \A\ contains $8$ actions: move forward, move backwards, strafe left, strafe right, look left, look right, look left and move forward, and look right and move forward. Each action was repeated for $4$ frames, that is, the agent was allowed to choose an action at every $4$ observations (we note that the ``environment steps'' shown in the plots refer to actual observations, not the number of decisions made by the agent). 

\section{Additional results}
\label{sec:add_resuts}

In our experiments we defined a set of $9$ test tasks in order to cover reasonably well three qualitatively distinct combinations of rewards: only positive rewards, only negative rewards, and mixed rewards. Figure~\ref{fig:add_results} shows the results of \sfgpisl\ and the baselines on the test tasks that could not be included in the paper due to the space limit. 

\begin{figure*}
\centering
\newcommand{\scl}{0.7}
\subfigure[Task \tasksix\ \label{fig:test_6}]{
\includegraphics[scale=\scl]{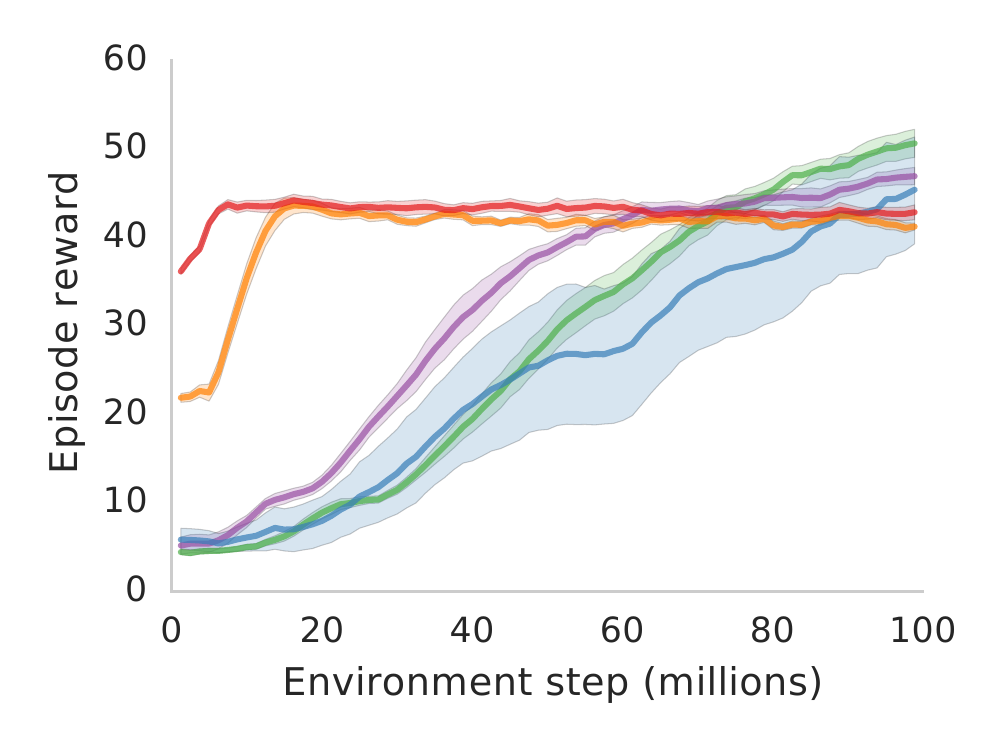}
}
 \subfigure[Task \taskseven\ ]{
 \includegraphics[scale=\scl]{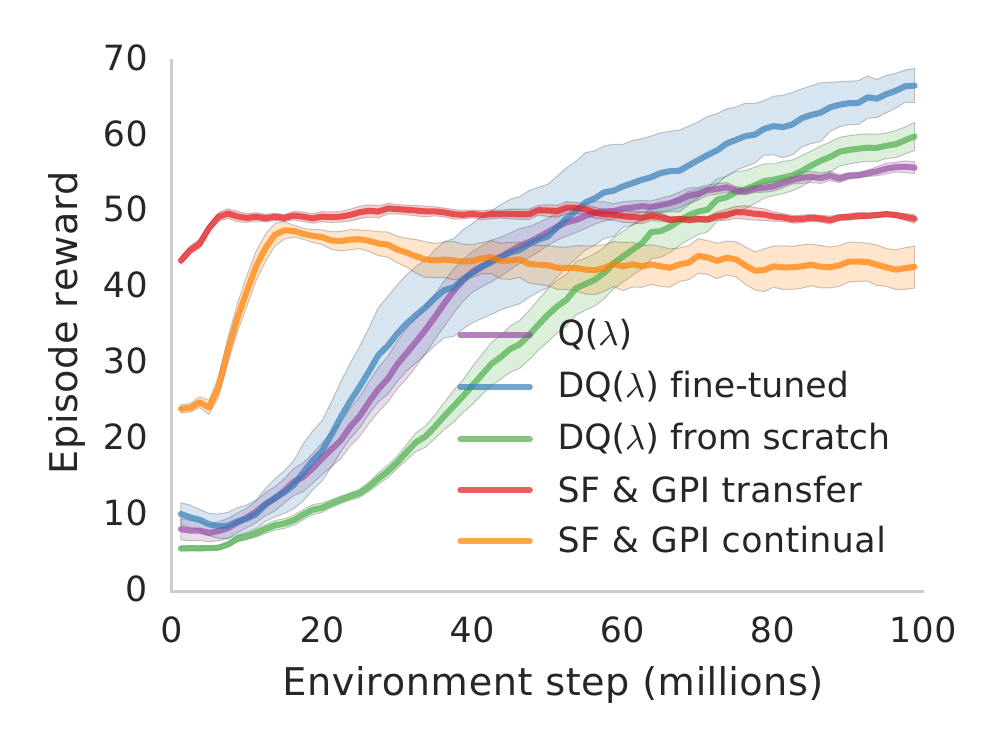}
 }
 
\subfigure[Task \taskeight\ ]{
\includegraphics[scale=\scl]{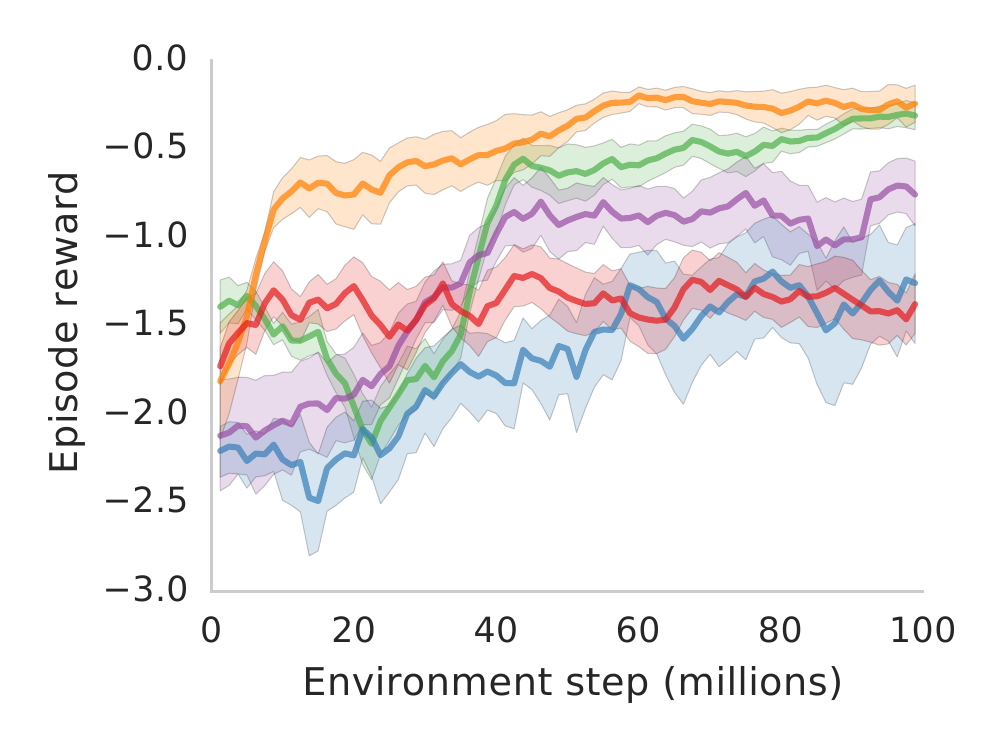}
}
\subfigure[Task \taskeleven\ ]{
\includegraphics[scale=\scl]{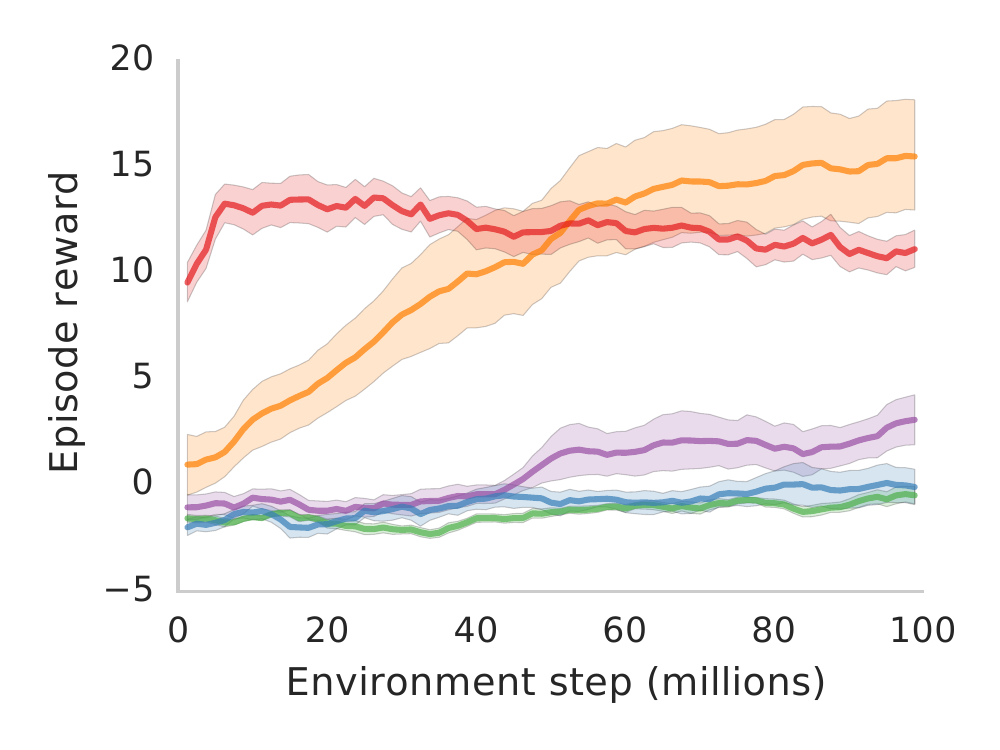}
}
\caption{Average reward per episode on test tasks not shown in the main paper. The $x$ axes have different scales because the amount of reward available changes across tasks. Shaded regions are one standard deviation over $10$ runs. \label{fig:add_results}
}
\end{figure*}

As discussed in the paper, ideally our agent should rely on the GPI policy when useful but also be able to learn and use a specialised policy otherwise. Figures~\ref{fig:add_analysis}, \ref{fig:add_analysis2} and ~\ref{fig:add_analysis3} show that this is possible with \sfgpirl.
Looking at Figure~\ref{fig:add_analysis} we see that when the test task only has positive rewards the performances of \sfgpisl\ and \sfgpirl\ are virtually the same. This makes sense, since in this case alternating between the policies $\pi_i$ learned on \MMtr\ should lead to good performance. Although initially the specialised policy \pitest\ does get selected by GPI a few times, eventually the policies $\pi_i$ largely dominate. The figure also corroborates the hypothesis that GPI is in general not computing a trivial policy, since even after settling on the policies $\pi_i$ it keeps alternating between them.

Interestingly, when we look at the test tasks with negative rewards this pattern is no longer observed. As shown in Figures~\ref{fig:add_analysis2} and~\ref{fig:add_analysis3}, in this case \sfgpirl\  eventually outperforms \sfgpisl---which is not surprising. Looking at the frequency at which policies are selected by GPI, we observe the opposite trend as before: now the policy \pitest\ steadily becomes the preferred one. The fact that a specialised policy is learned and eventually dominates is reassuring, as it indicates that \pitest\ will contribute to the repository of skills available to the agent when added to~\tPsi.    

As discussed in the main paper, one of the highlights of the proposed algorithm is the fact that it can use any set of base tasks and enable transfer to related, unseen, tasks. To empirically verify this claim, we tested several sets of base tasks. We used as a reference our ``canonical'' set of base tasks \MMtr\ that spans the environment $\MM$. We further validated our approach on a linearly-depend set of base tasks $\MMtr'$ that does not span the set of (test) tasks we are interested in---these results are shown in Figure~\ref{fig:basis_comparison}. In addition to these, we now present experiments with a third set of base tasks that does span the set of tasks but does not include any of the canonical tasks: $\MMtr'' = \{ {\tt (1,\sm0.1,\sm0.1,\sm0.1)},{\tt (\sm0.1, 1, \sm0.1,\sm0.1)},{\tt (\sm0.1,\sm0.1, 1, \sm0.1)},{\tt (\sm0.1, \sm0.1, \sm0.1, 1)} \}$. In Figure \ref{fig:appendix_basis_comparison} we report results obtained by \sfgpisl, using the three sets of base tasks described, on the $9$ previously-introduced test tasks. We can see that all sets of base tasks lead to satisfactory transfer, but the performance of the transferred policy can vary significantly depending on the relation between the base tasks used and the test task. This is again an illustration of the distinction between the ``reward basis'' and the ``behaviour basis'' discussed in Section~\ref{sec:results}.

\begin{figure*}
\centering
\newcommand{\scl}{0.7}
\newcommand{\scll}{0.15}

\subfigure[Task \taskfive\ ]{
 \begin{tabular}{r}
 \includegraphics[scale=\scl]{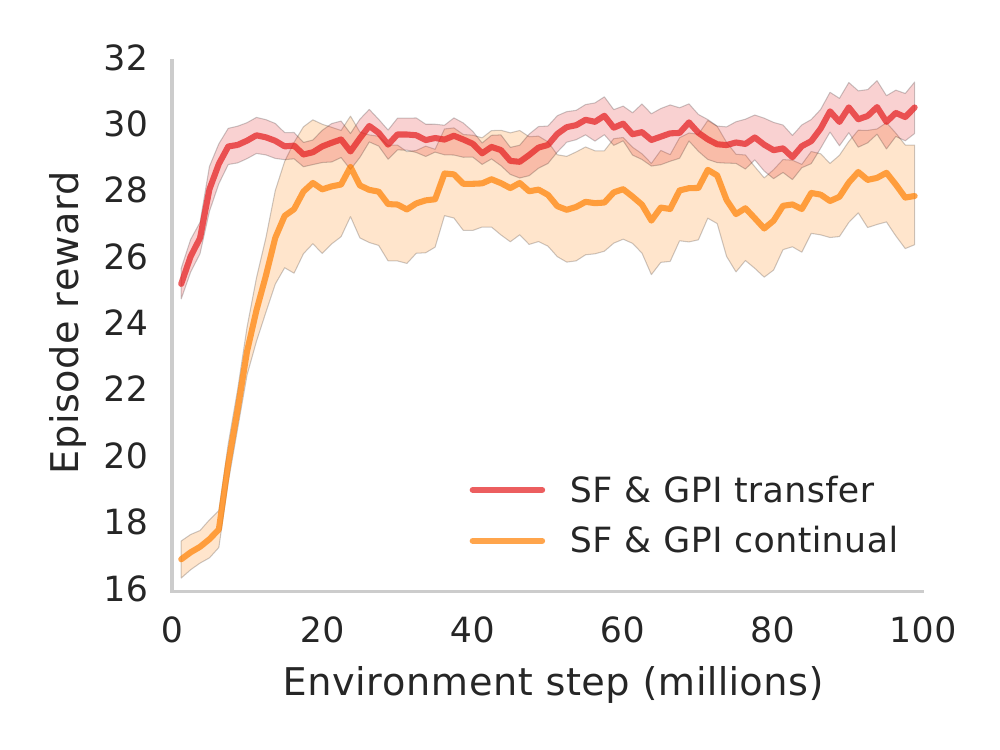} \\
 \includegraphics[scale=\scll]{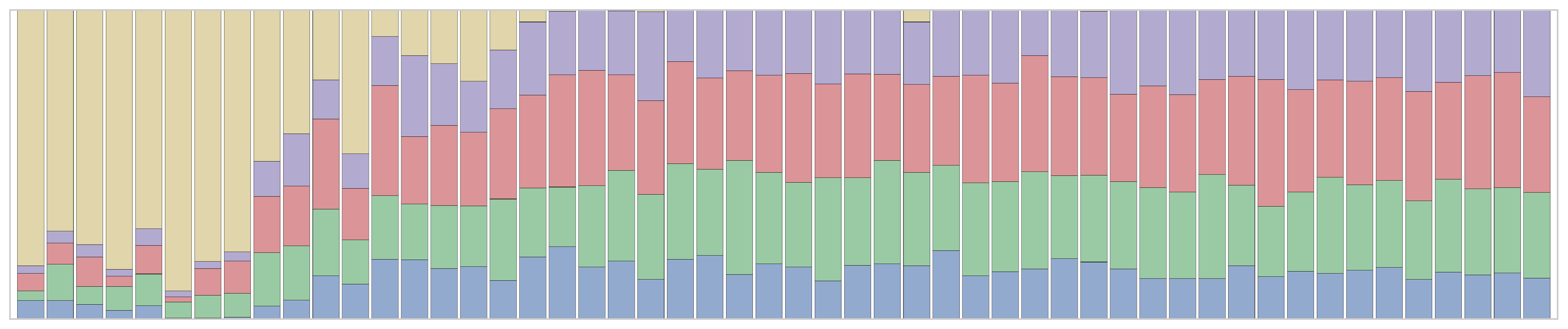}
 \end{tabular}
 }
\subfigure[Task \tasksix\ ]{
 \begin{tabular}{r}
 \includegraphics[scale=\scl]{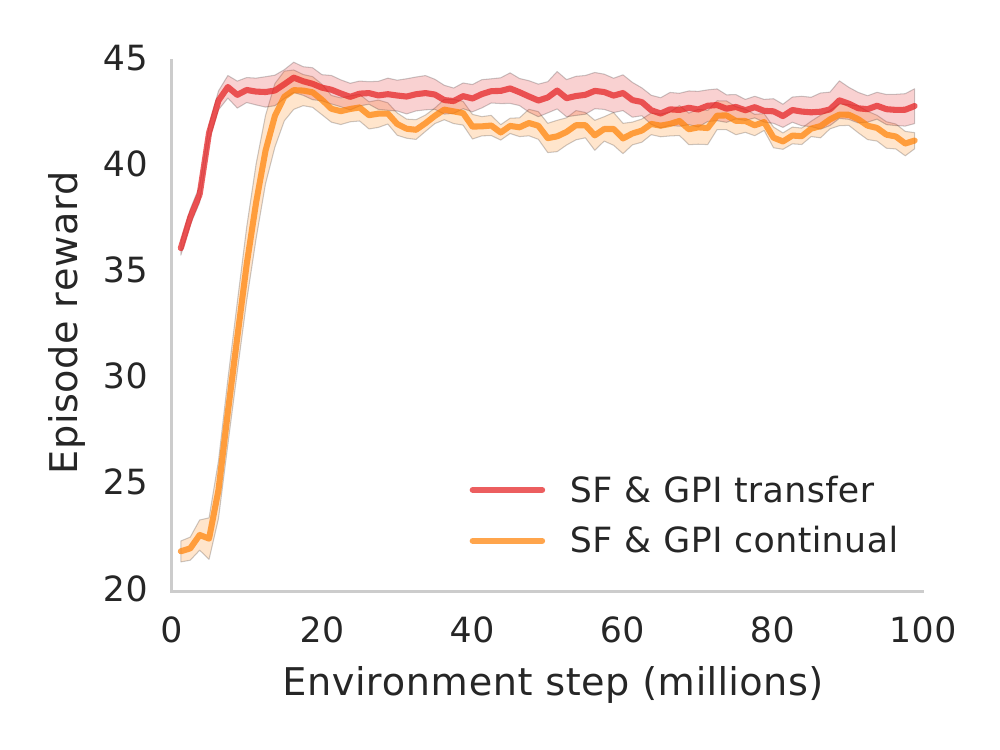} \\
 \includegraphics[scale=\scll]{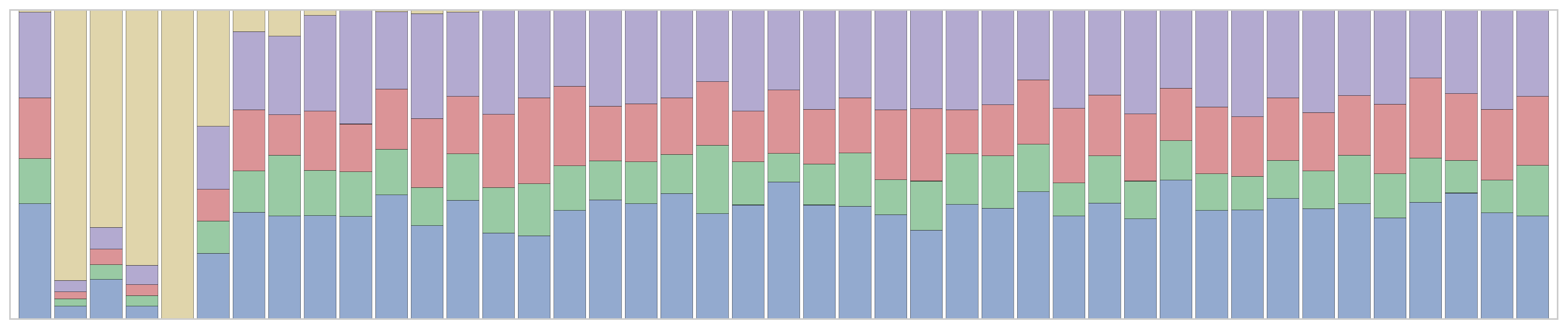}
 \end{tabular}
 }

\subfigure[Task \taskseven\ ]{
 \begin{tabular}{r}
 \includegraphics[scale=\scl]{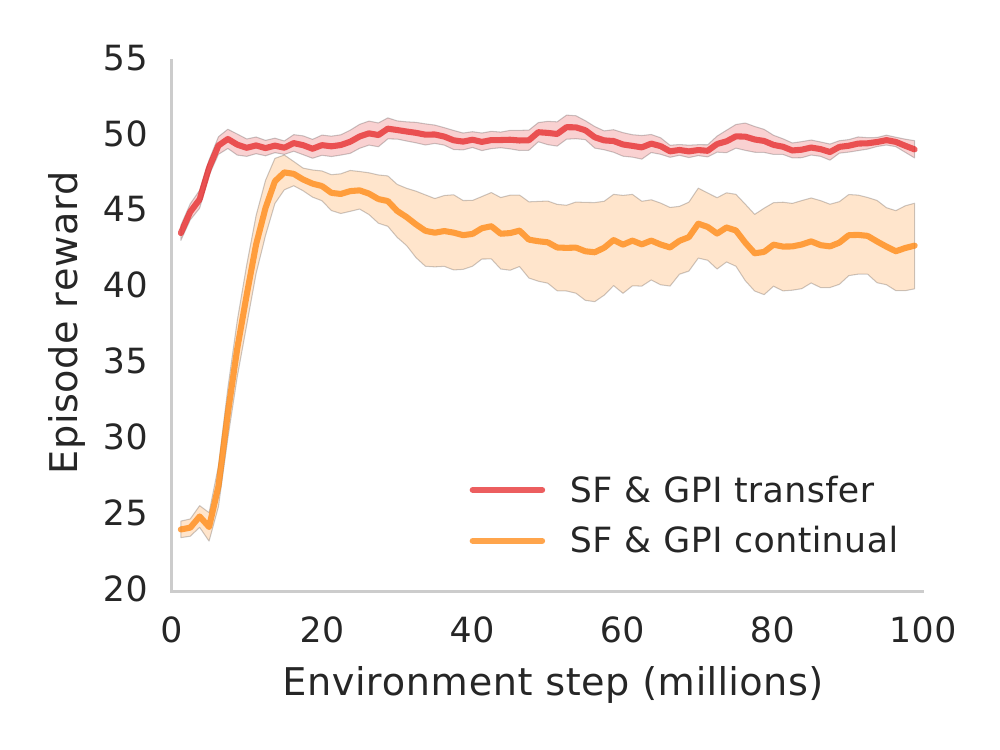} \\
 \includegraphics[scale=\scll]{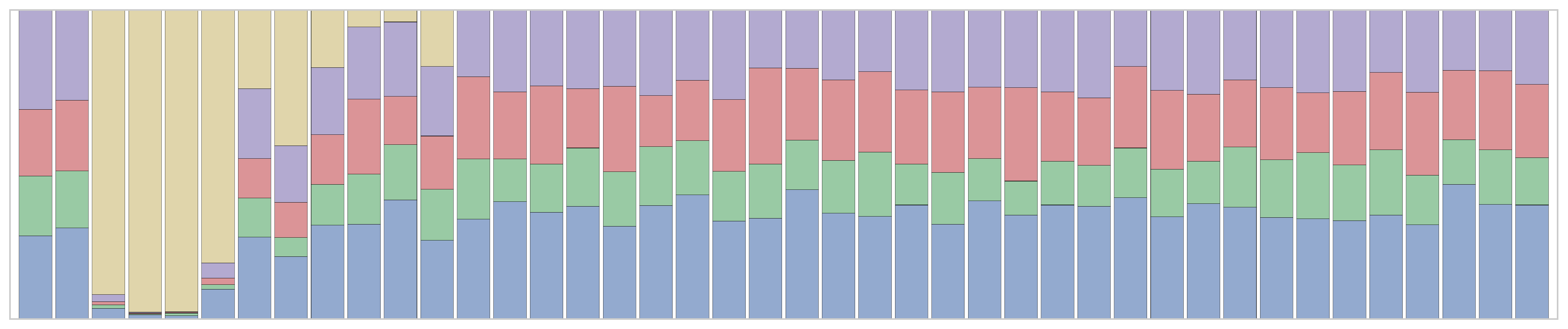}
 \end{tabular}
 }

\caption{{\bf Top figures}: Comparison between \sfgpisl\ and \sfgpirl. 
Shaded regions are one standard deviation over $10$ runs. All the runs of \sfgpisl\ and \sfgpirl\ used the same basis \tPsi. {\bf Bottom figures}: Coloured bar segments represent the frequency at which the policies $\pi_i$ were selected by GPI in one run of \sfgpirl, with each colour associated with a specific policy. The policy \pitest\ specialised to the task is represented in light yellow. \label{fig:add_analysis}}
\end{figure*}

\begin{figure*}
\centering
\newcommand{\scl}{0.7}
\newcommand{\scll}{0.15}

\subfigure[Task \taskeight\ ]{
 \begin{tabular}{r}
\includegraphics[scale=\scl]{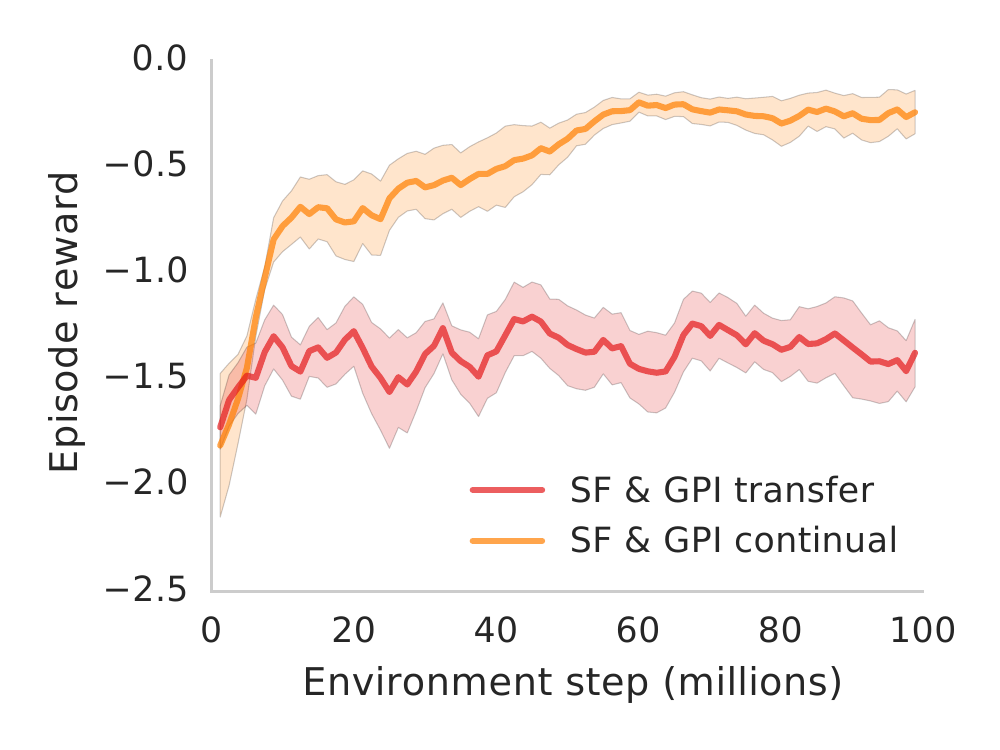} \\
\includegraphics[scale=\scll]{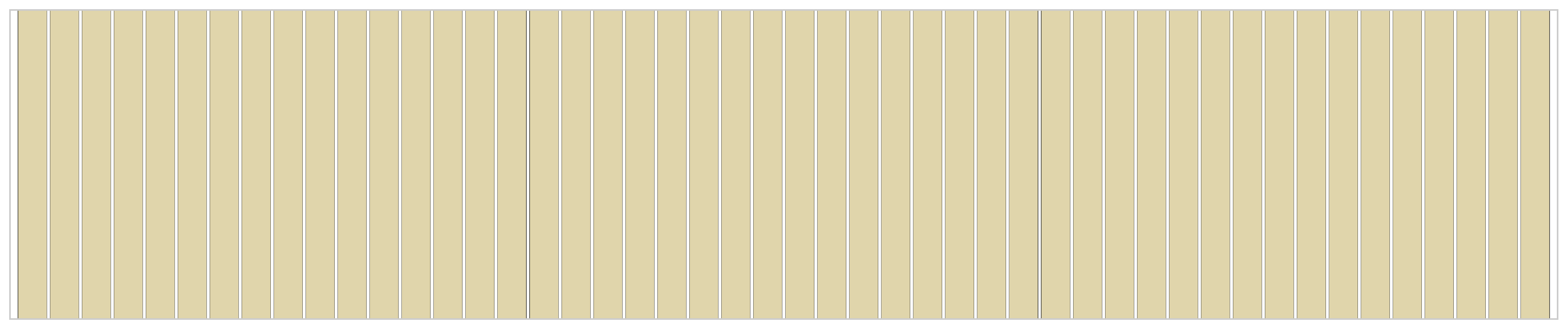}
 \end{tabular}
}
\subfigure[Task \tasknine\ ]{
 \begin{tabular}{r}
\includegraphics[scale=\scl]{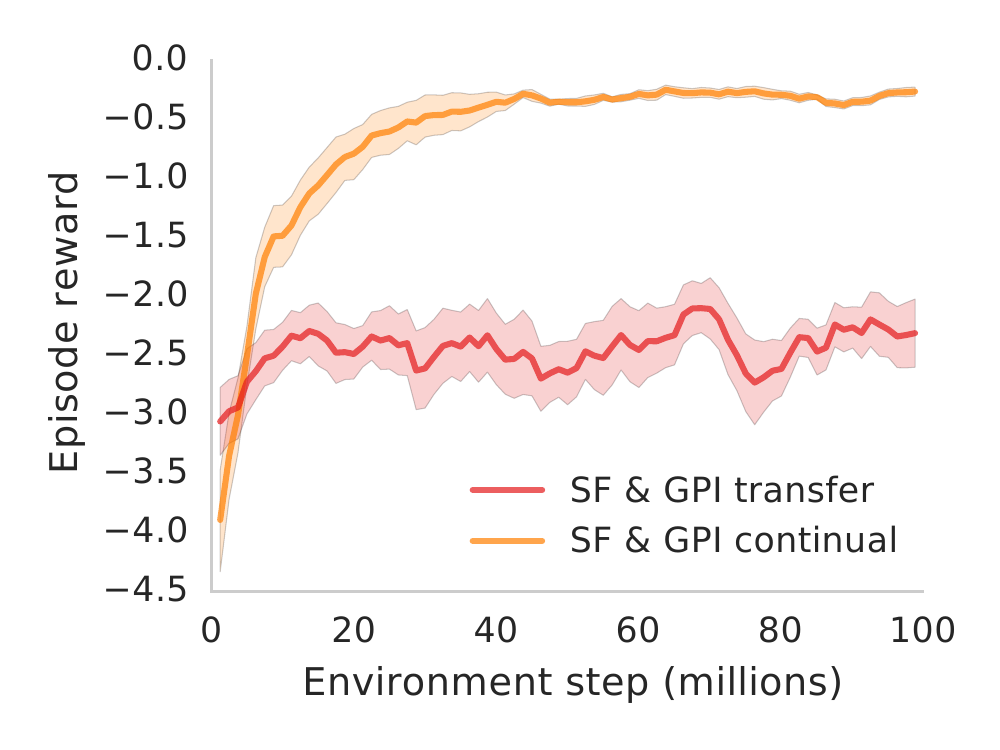} \\
\includegraphics[scale=\scll]{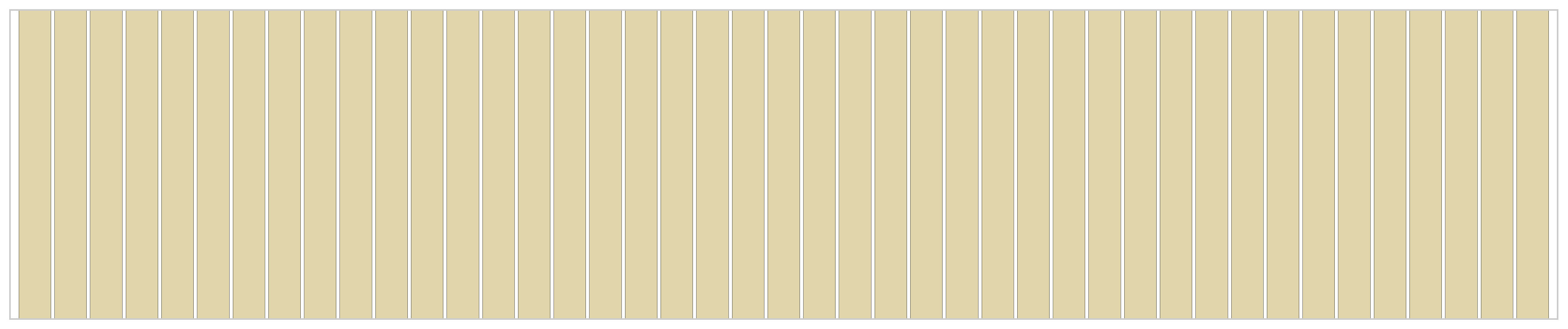}
 \end{tabular}
}

\subfigure[Task \taskten\ ]{
 \begin{tabular}{r}
\includegraphics[scale=\scl]{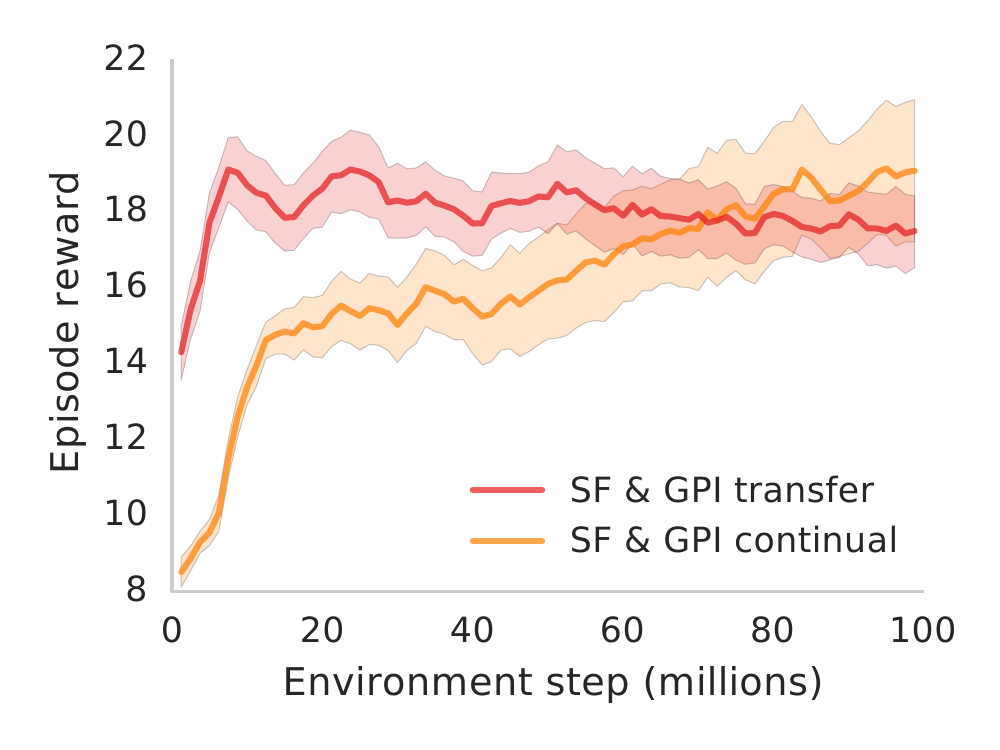} \\
\includegraphics[scale=\scll]{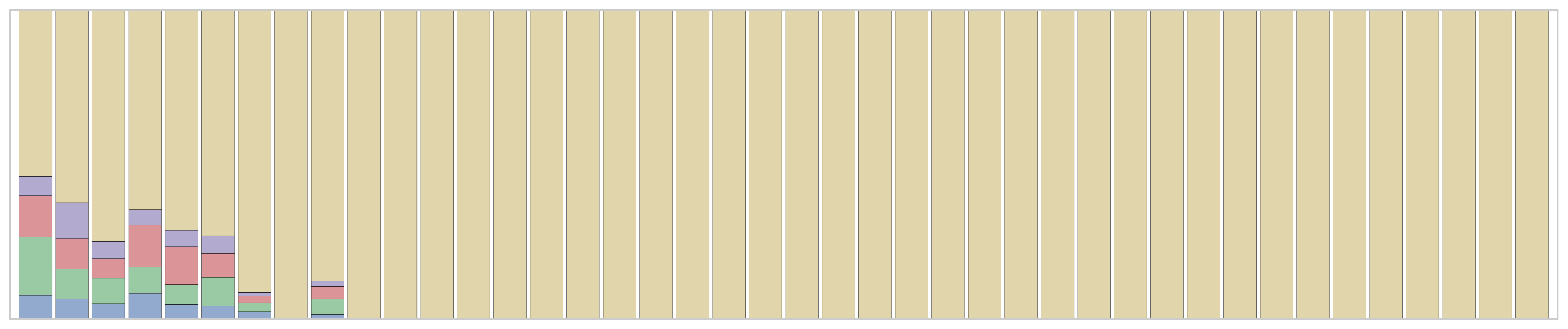}
 \end{tabular}
}

\caption{{\bf Top figures}: Comparison between \sfgpisl\ and \sfgpirl. 
Shaded regions are one standard deviation over $10$ runs. All the runs of \sfgpisl\ and \sfgpirl\ used the same basis \tPsi. {\bf Bottom figures}: Coloured bar segments represent the frequency at which the policies $\pi_i$ were selected by GPI in one run of \sfgpirl, with each colour associated with a specific policy. The policy \pitest\ specialised to the task is represented in light yellow. \label{fig:add_analysis2}}
\end{figure*}

\begin{figure*}
\centering
\newcommand{\scl}{0.7}
\newcommand{\scll}{0.15}

\subfigure[Task \taskeleven\ ]{
 \begin{tabular}{r}
\includegraphics[scale=\scl]{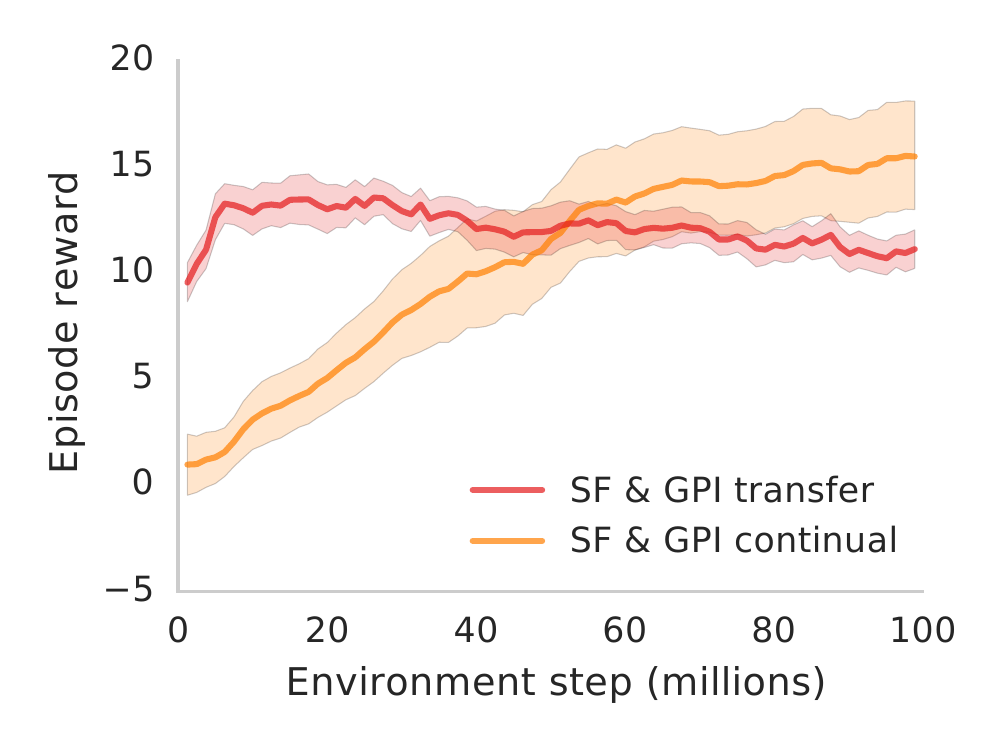} \\
\includegraphics[scale=\scll]{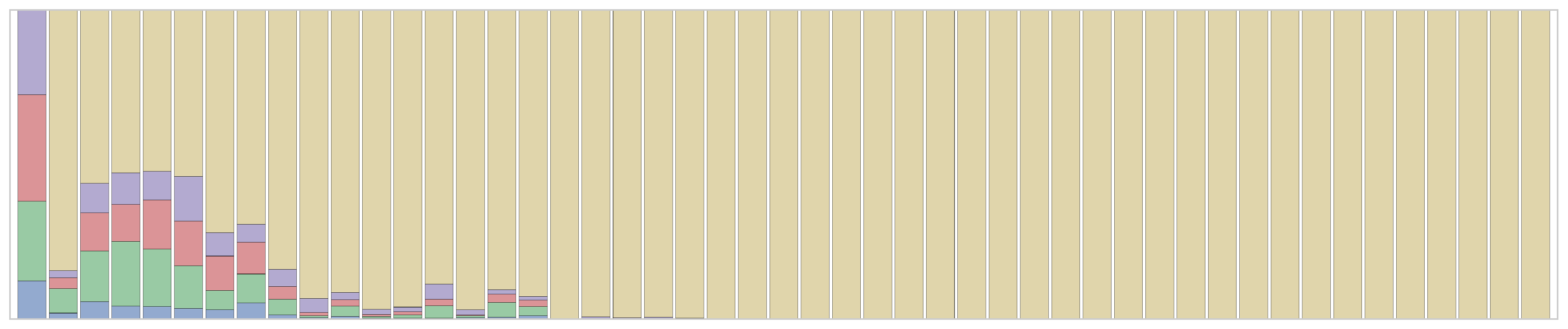}
 \end{tabular}
}
 \subfigure[Task \tasktwelve\ ]{
 \begin{tabular}{r}
 \includegraphics[scale=\scl]{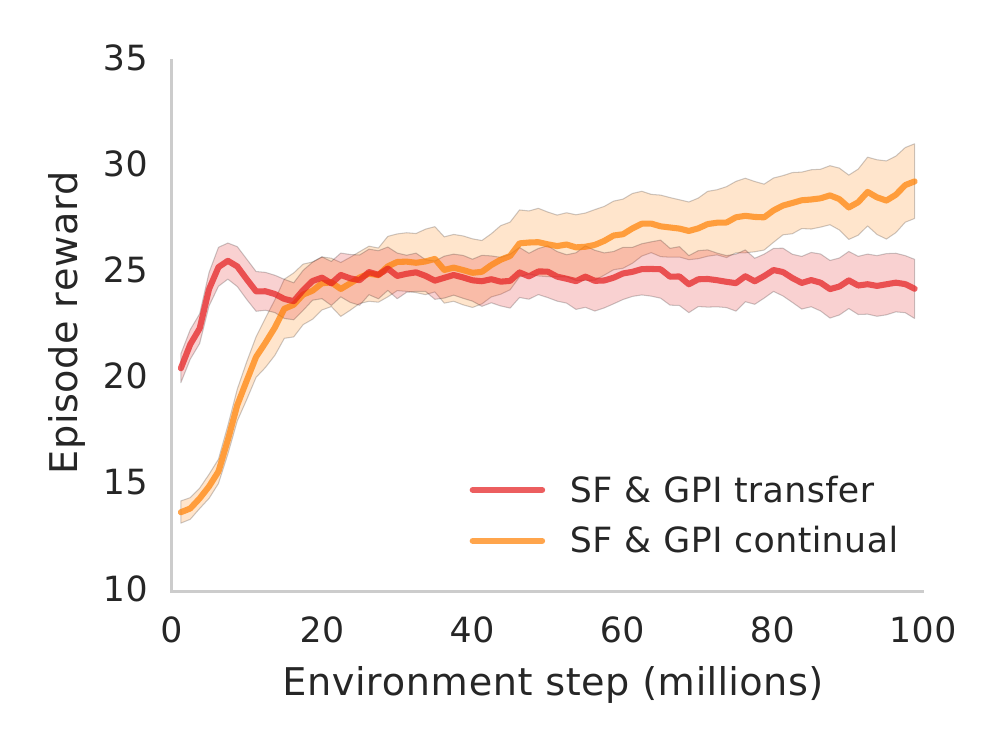} \\
 \includegraphics[scale=\scll]{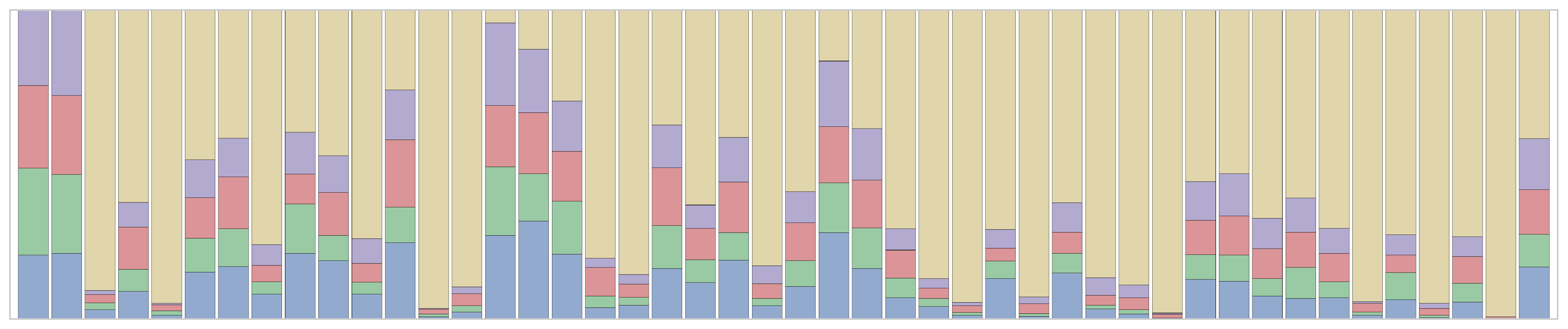}
 \end{tabular}
 }
 
 \subfigure[Task \taskthirteen\ ]{
 \begin{tabular}{r}
 \includegraphics[scale=\scl]{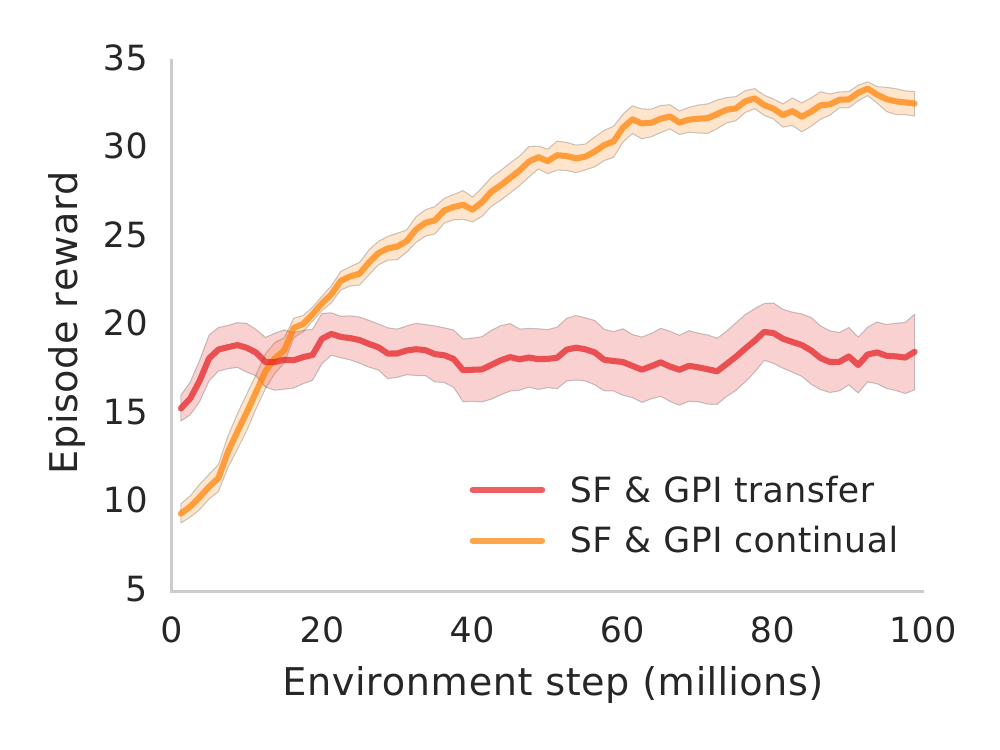} \\
 \includegraphics[scale=\scll]{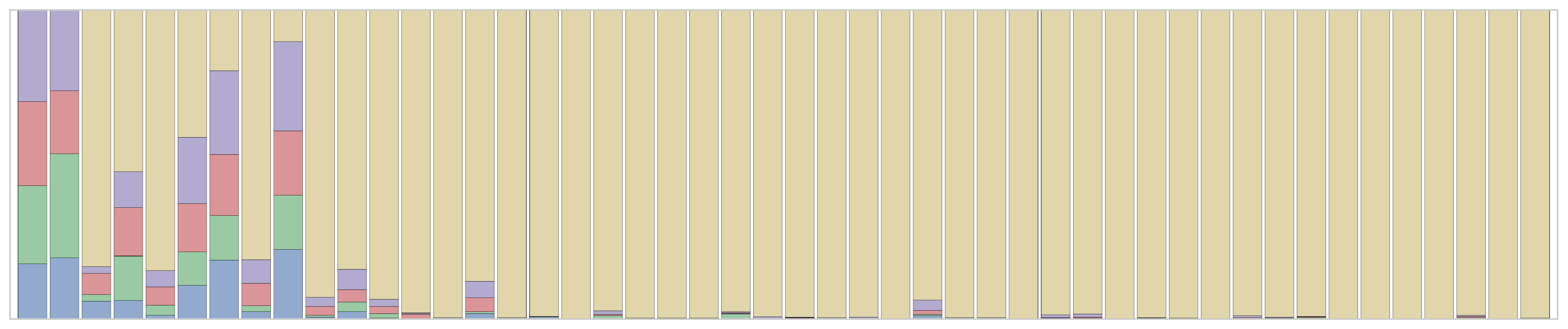}
 \end{tabular}
 }

 \caption{{\bf Top figures}: Comparison between \sfgpisl\ and \sfgpirl. 
Shaded regions are one standard deviation over $10$ runs. All the runs of \sfgpisl\ and \sfgpirl\ used the same basis \tPsi. {\bf Bottom figures}: Coloured bar segments represent the frequency at which the policies $\pi_i$ were selected by GPI in one run of \sfgpirl, with each colour associated with a specific policy. The policy \pitest\ specialised to the task is represented in light yellow. \label{fig:add_analysis3}}
\end{figure*}

\begin{figure}
\begin{center}
\includegraphics[scale=0.5]{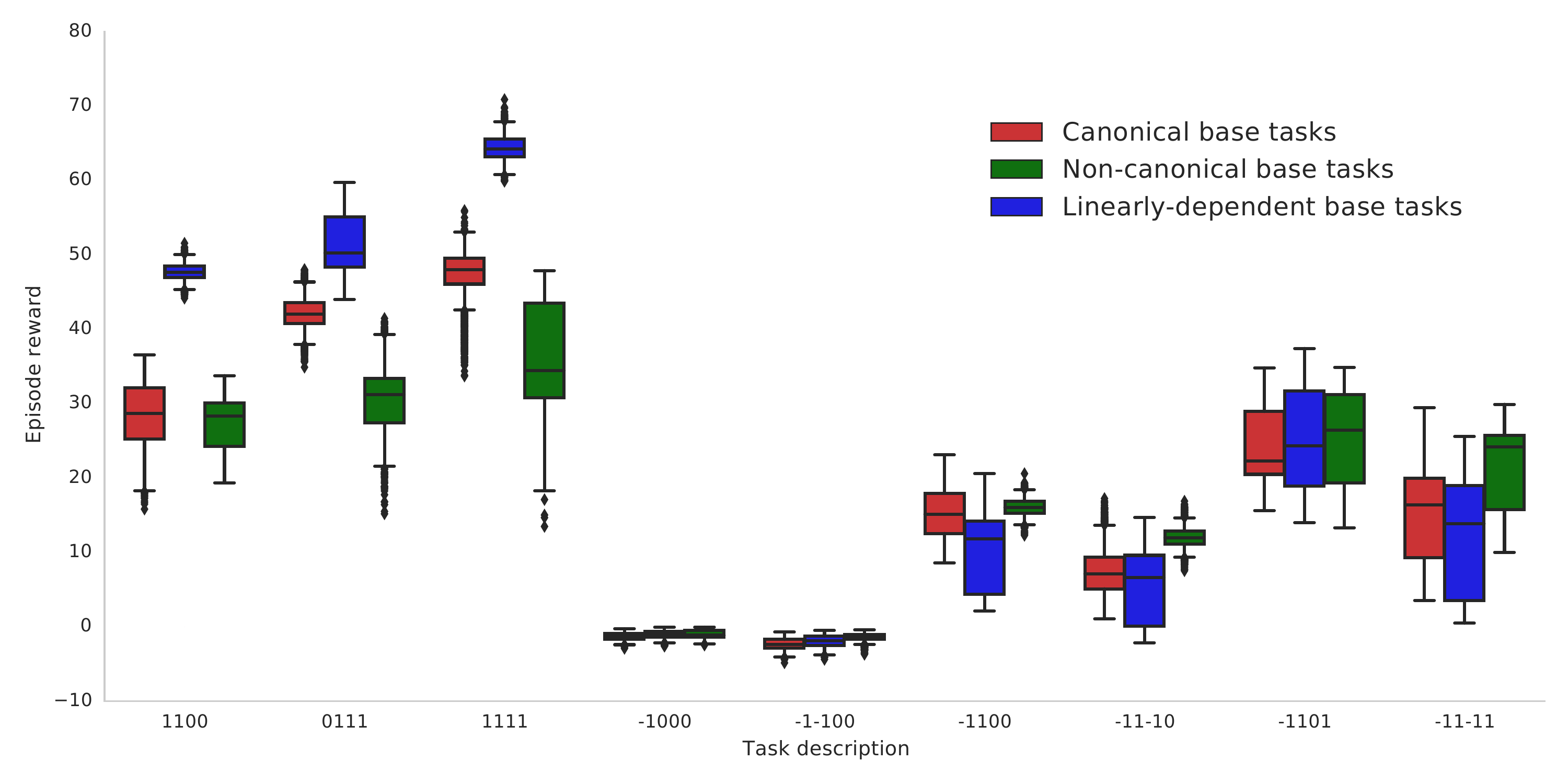}
\end{center}
\vspace{-2mm} 
\caption{Performance of \sfgpisl\ using base tasks $\MMtr$, $\MMtr'$ and $\hat{\mathcal{M}}''$, on the $9$ test tasks. The box plots summarise the distribution of the rewards received per episode between $50$ and $100$ million steps of learning. 
\label{fig:appendix_basis_comparison} }
\vspace{-2mm}
\end{figure}

\end{document}